\author{Achilles A. Beros}
\author{Colin de la Higuera}
\title{A Canonical Semi-Deterministic Transducer}
\keywords{Grammatical Inference, Semi-Deterministic Transducers}
\newtheorem{Theorem}{Theorem}[section]
\newtheorem{Proposition}[Theorem]{Proposition}
\newtheorem{Lemma}[Theorem]{Lemma}
\newtheorem{Corollary}[Theorem]{Corollary}
\theoremstyle{definition}
\newtheorem{Definition}[Theorem]{Definition}
\newtheorem{Example}[Theorem]{Example}
\newcommand{\vac} {\mathsf{Vac}}
\newcommand{\states}{\mbox{\sc{states}}}
\newcommand{\future}{\mbox{\sc{future}}}
\newcommand{\upto} {{\upharpoonright}}
\newcommand{\lexorder}{LEXORDER}
\newcommand{\llexorder}{LLEXORDER}
\newcommand{\lexleast}{LEXLEAST}
\newcommand{\llexleast}{LLEXLEAST}
\newcommand{\lllesdf}[1]{\overline{#1}}
\newcommand{\trinput}{input}
\newcommand{\troutput}{output}
\begin{document}


\maketitle

\begin{abstract}
We prove the existence of a canonical form for semi-deterministic transducers with sets of pairwise incomparable output strings.  Based on this, we develop an algorithm which learns semi-deterministic transducers given access to translation queries.  We also prove that there is no learning algorithm for semi-deterministic transducers that uses only domain knowledge.
\end{abstract}

\section{Introduction}

Transducers, introduced by \cite{nivat68}, are a type of abstract machine which defines a relation between two formal languages.  As such, they are interpreted as modeling translation in any context where formal languages are applicable. We provide no background on formal languages in this paper; an overview of the subject can be found in \cite{bers79} and \cite{saka09}.  Alternatively, transducers can be viewed as a generalization of finite state machines.  This view was introduced by Mohri, who uses transducers in the context of natural language processing \cite{mohr97,mohr00a} and \cite{mohr00b}.

A fundamental task when studying the theory of transducers is to look for classes of transducers that can be learned given access to some form of data.  If a class of transducers, $\mathscr C$, is found to be learnable, then a predictive model can be produced in any application where a translation from the class $\mathscr C$ is in use.  The significance of transducers, specifically expanding the range of the learnable classes, is clear from the scope of applications of transducers.  Among many others, some well known applications are in the fields of morphology and phonology \cite{roar07}, machine translation \cite{amen01a,casa04,clar01b}, web wrappers \cite{carm05}, speech \cite{mohr97} and pattern recognition \cite{bern06}.  In each of these cases, different classes of transducers are examined with characteristics suitable to the application.  Distinguishing characteristics of different classes include determinism properties, the use of probabilites or weights, as well as details of the types of transitions that are permitted.

\subsection{Transducer learning}

An important step in the theory of transducers was the development of the algorithm \textsc{Ostia}.  Introduced in \cite{onci93}, \textsc{Ostia} was designed for language comprehension tasks \cite{cast93}.  A number of elaborations on the original algorithm have since arisen, many of them aimed at trying to circumvent the restriction to total functions that limited \textsc{Ostia}.  Typically, these attempts involved adding some new source of information.  For example, \textsc{Ostia}-N uses negative (input) examples and \textsc{Ostia}-D supposes the algorithm has some knowledge of the domain of the function \cite{onci96}.  Similar ideas were explored later by \cite{kerm02a} and \cite{cost04}.  An application of \textsc{Ostia} for active learning is presented in \cite{vila96}. Using dictionaries and word alignments has been tested by \cite{vila00}.  A demonstrated practical success of \textsc{Ostia} came in 2006.  The Tenjinno competition \cite{star06} was won by \cite{clar06a} using an \textsc{Ostia} inspired algorithm.

\subsection{Towards nondeterminism with transducers}

Non-deterministic transducers pose numerous complex questions -- even parsing becomes a difficult problem \cite{casa99,casa00a}.  Interest in non-deterministic models remains, however, as the limitations of subsequential transducers make them unacceptable for most applications.  The first lifting of these constraints was proposed by \cite{alla02}.  They propose a model in which the final states may have multiple outputs.  In his PhD thesis, Akram introduced a notion of semi-determinism \cite{akra13} that strikes a balance between complete non-determinism and the very restrictive subsequential class.  He provided an example witnessing that semi-deterministic transducers are a proper generalization of deterministic transducers, but did not pursue the topic further, focusing instead on probabilistic subsequential transducers.  We examine an equivalent formulation of Akram's semi-determinism based on methods of mathematical logic.  In particular, by viewing the definition from a higher level of the ranked universe, we convert what would be a general relation into a well-defined function.  \cite{kunen80} provides an overview of a number of important topics in set theory including the ranked and definable universes.  Some more recent developments in set theory is \cite{jech2003set}.

A significant obstacle in learning non-deterministic transducers is the fact that an absence of information cannot be interpreted.  One approach to overcoming this problem is to use probabilities.  We eschew the probabilistic approach in favor of a collection of methods that have their antecedents in Beros's earlier work distinguishing learning models \cite{beros2013} and determining the arithmetic complexity of learning models \cite{berosND}.

An earlier version of this work was presented at the International Conference on Grammatical Inference \cite{beros-delahiguera-2014}.  In this version, we provide more of the algorithms involved in learning semi-deterministic transducers and prove that the algorithms converge.  We also establish the relationship between semi-deterministic transducers and two other natural extensions of deterministic transducers and the bi-languages they generate, specifically $p$-subsequential transducers and finitary, finite-state, and bounded relations (definitions of these terms are provided in Section \ref{other-non-det-section}).  Finally, we show that semi-deterministic transducers and the associated bi-languages fail two closure properties: closure under composition and closure under bi-language reversal.

\section{Notation}

We make use of following common notation in the course of this paper.  Throughout, the symbols $x,y$ and $z$ denote strings and $a$ and $b$ will denote elements of a given alphabet.  We shall use the standard notation $\lambda$ for the empty string.

\begin{itemize}
\item The concatenation of two strings, $x$ and $y$, is denoted by $xy$.  We write $x \prec y$ if there is a string $z \neq \lambda$ such that $y = xz$.  We write $x \preceq y$ if $x \prec y$ or $x = y$.  This order is called the prefix order.

\item For a set of strings, $S$, $T[S] = \{ x : (\exists y\in S)\big( x \preceq y \big) \}$ is the prefix closure of $S$.

\item A tree is a set of strings, $S$, such that $T[S] = S$.  $S'$ is a subtree of $S$ if both $S$ and $S'$ are trees and $S'$ is contained in $S$.  A strict subtree is a subtree that is not equal to the containing tree.  

\item $\mathscr P(X) = \{ Y: Y\subseteq X \}$ and $\mathscr P^*(X) = \{ Y: Y\subseteq X \wedge |Y|<\infty \}$.

\item We will use elements of $\mathbb N$ both as numbers and as sets.  In particular, we use the following inductive definition: $0 = \emptyset$ and, given $0, \ldots , n$, we define $n+1 = \{0, \ldots , n\}$.

\item Following the notation of set theory, the string $x = a_0 \ldots a_n$ is a function with domain $n+1$.  Thus, $x \upto k = a_0 \ldots a_{k-1}$ for $k \leq n+1$.  $|x|$ is the length of $x$ and $x^-$ is the truncation $x \upto (|x| - 1)$.  Note that the last element of $x$ is $x(|x|-1)$ and the last element of $x^-$ is $x(|x|-2)$. 

\item Again, drawing on set theory terminology, we call two functions, $f$ and $g$, compatible if $(\forall x \in \mbox{dom}(f) \cap \mbox{dom}(g))(f(x) = g(x))$.

\item We write $x\parallel y$ if $x = y$, $x \prec y$ or $x \succ y$ and say $x$ and $y$ are comparable.  Otherwise, we write $x \perp y$ and say that $x$ and $y$ are incomparable.

\item By $<_{lex}$ and $<_{llex}$ we denote the lexicographic and length-lexicographic orders, respectively.

\item For an alphabet $\Sigma$, $\Sigma^*$ is the set of all finite strings over $\Sigma$.  A tree over $\Sigma$ is a tree whose members are members of $\Sigma^*$, where the ordering of the tree is consistent with the prefix order on $\Sigma^*$ and the tree is prefix closed. 

\item We reserve a distinguished character, \#, which we exclude from all alphabets under consideration and we will use \# to indicate the end of a word.  We will write $x \#$ when we append the \# character to $x$.
\end{itemize}


\section{Bi-Languages and Transducers}\label{bilang-trans-section}

Bi-languages are the fundamental objects of study.  They capture the semantic correspondence between two languages.  In principle, this correspondence does not specify any ordering of the two languages, but translation is always done from one language \emph{to} another language.  As such, we refer to the input and the output languages of a bi-language.  For notational simplicity, in everything that follows $\Sigma$ is the alphabet for input languages and $\Omega$ is the alphabet for output languages.  Using this notation, the input language is a subset of $\Sigma^*$ and the output language is a subset of $\Omega^*$.  We now present the standard definition of a bi-language.

\begin{Definition}
Consider two languages, $L \subseteq \Sigma^*$ and $K \subseteq \Omega^*$.  A \emph{bi-language from $L$ to $K$} is a subset of $L \times K$ with domain $L$.
\end{Definition}

For our purposes, we wish to indicate the direction of translation and to aggregate all translations of a single string.  To this end, in the remainder of this paper, we will use the following equivalent definition of a bi-language.

\begin{Definition}\label{bi-L}
Consider two languages, $L \subseteq \Sigma^*$ and $K \subseteq \Omega^*$.  A \emph{bi-language from $L$ to $K$} is a function $f: L \rightarrow \mathscr P(K)$.  $L$ is said to be the \emph{input language} and $K$ the \emph{output language} of $f$.  When defined without reference to a specific output language, a bi-language is simply a function $f:L \rightarrow \mathscr P(\Omega^*)$.  If $f$ and $g$ are two bi-languages, then $f$ is a \emph{sub bi-language} of $g$ if $\mbox{dom}(f) \subseteq \mbox{dom}(g)$ and for all $x \in \mbox{dom}(f)$, $f(x) \subseteq g(x)$.  A finite subset $\mathcal D$ of $L \times K$ is \emph{consistent with $f$} if for every $\langle x,X \rangle \in \mathcal D$, $X \in f(x)$.
\end{Definition}

Note that for a bi-language $f$ from $L$ to $K$, we do not require that $\bigcup_{x \in L} f(x) = K$.  We are interested in languages whose generating syntax is some form of transducer.


\begin{Definition}\label{transducer}
A transducer $G$ is a tuple $\langle \states[G],I,\Sigma,\Omega,E \rangle$.
\begin{enumerate}
\item $\states[G]$ is a finite set of states.  $I \subseteq \states[G]$ is the set of \emph{initial states}.
\item $\Sigma$ and $\Omega$ are the \emph{input alphabet} and \emph{output alphabet}, respectively -- finite sets of characters which do not contain the reserved symbol \#.
\item $E \subseteq \states[G] \times \states[G] \times (\Sigma^*\cup\{\#\}) \times \mathscr P^*(\Omega^*)$ is a finite relation called the \emph{transition relation}.  An element $e \in E$ is called a \emph{transition} with $e = \langle start(e), end(e), \trinput(e), \troutput(e) \rangle$.  If $\trinput(e) = \#$, then $e$ is called a \emph{\#-transition}.
\end{enumerate}
A transducer is said to \emph{generate} or \emph{induce} the bi-language which consists of all pairs of strings $\langle x,Y \rangle \in \Sigma^* \times \Omega^*$ such that:
\begin{enumerate}
\item $(\exists x_0, \ldots , x_n \in \Sigma^*)(x = x_0 \ldots x_n)$,
\item $(\exists e_0, \ldots , e_{n+1} \in E)(\exists q \in I)\Big((\forall i  \in \{ 1, \ldots , n \})\big(x_i = \trinput(e_i) \wedge end(e_i) = start(e_{i+1})\big) \wedge start(e_0) = q \wedge \trinput(e_{n+1}) = \#\Big)$ and 
\item there are $Y_i \in \troutput(e_i)$ for $i \leq n+1$ such that $Y = Y_0Y_1 \cdots Y_{n+1}$.
\end{enumerate}
\end{Definition}

%

This paper addresses \emph{semi-deterministic bi-languages} which are bi-languages generated by \emph{semi-deterministic transducers}.  These were defined in \cite{akra13}.  We use an equivalent formulation.

%
%
%
%
%
%

\begin{Definition}
A \emph{semi-deterministic transducer (SDT)} is a transducer with a unique initial state such that \allowbreak
\begin{enumerate}

\item $\trinput(e)\in \Sigma \cup \{\#\}$ for every transition $e$,

\item given a state, $q$, and $a\in \Sigma$, there is at most one transition, $e$, with $start(e) = q$ and $\trinput(e) = a$ and

\item given a transition, $e$, $\troutput(e)$ is a finite set of pairwise incomparable strings in $\Omega^*$ (i.e., $\troutput(e) \in \mathscr P^*(\Omega^*) \wedge (\forall X,Y \in \troutput(e))\big( X \perp Y \big)$).  
\end{enumerate}
%
A \emph{semi-deterministic bi-language} (SDBL) is a bi-language that can be generated by an SDT.
\end{Definition}

%
%

Two useful properties of SDTs follow from the definition.  First, if $e\in E$ and $\lambda \in \troutput(e)$, then $\troutput(e) = \{\lambda\}$.  Second, although there may be multiple translations of a single string, every input string follows a \emph{unique path} through an SDT.  The precise meaning of this is made clear in the next definition.  We must also note that, while SDBLs can be infinite, the image of any member or finite subset of $L$ is finite.  Thus, an SDBL is a function $f: L \rightarrow \mathscr P^*(\Omega^*)$. 

%
%
%
%
%

\begin{Definition}\label{paths-def}
Let $G$ be an SDT with input language $L$.  A \emph{path through $G$} is a string $e_0 \ldots e_k\in E^*$, where $E$ is the set of transitions, such that $start(e_{i+1}) = end(e_i)$ for $i<k$.  $G[p]$ is the collection of all outputs of $G$ that can result from following path $p$.  $p_x$ is the unique path through $G$, $e_0 \ldots e_k \in E^*$, defined by $x\in \Sigma^*$ such that $start(e_0)$ is the unique initial state of $G$, if such a path exists.  We denote the final state of the path $p_x$ by $q_x$. 
\end{Definition}

\section{Ordering maximal antichains}

When parsing sets of strings, we will often use the following operations.

\begin{Definition}\label{*-1-def} Let $S$ and $P$ be two sets of strings. 
\begin{itemize}
\item $P * S = \{x y : x\in P \wedge y\in S\}$.

\item $P^{-1}S = \{ y : (\exists x\in P)\big( x y \in S \big)  \}$.
\end{itemize}
For notational simplicity, we define $x^{-1}S = \{x\}^{-1}S$, $P^{-1}x = P^{-1}\{x\}$, $x*S = \{x\}*S$ and $P*x = P*\{x\}$ for a string $x$.
\end{Definition}


\begin{Proposition}
$*$ is associative, but is not commutative.
\end{Proposition}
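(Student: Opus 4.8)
The plan is to recognize $*$ as the ordinary concatenation product of two languages and to reduce both assertions to the corresponding facts about concatenation of individual strings. For associativity I would unfold Definition~\ref{*-1-def} on each side of the target identity $(P * S) * R = P * (S * R)$. Expanding the left side gives $(P*S)*R = \{(xy)z : x\in P,\ y\in S,\ z\in R\}$, while expanding the right side gives $P*(S*R) = \{x(yz) : x\in P,\ y\in S,\ z\in R\}$. Because concatenation of strings is itself associative, $(xy)z = x(yz)$ for every choice of strings $x,y,z$, so the two set-builder descriptions define the same set. I would make this precise by proving the two inclusions separately: given any element of one side, it is witnessed by some triple $(x,y,z)$ with $x\in P$, $y\in S$, $z\in R$, and the same triple witnesses membership in the other side. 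This is the entire argument for the first clause.

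For the failure of commutativity I would simply exhibit one counterexample. Choosing two distinct letters $a,b$ in the underlying alphabet and setting $P = \{a\}$ and $S = \{b\}$, we get $P * S = \{ab\}$ and $S * P = \{ba\}$; since $a \neq b$ the strings $ab$ and $ba$ differ, whence $P * S \neq S * P$. A single such instance is enough to show that the universally quantified commutativity statement fails.

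I do not anticipate any real obstacle: the substance of the proposition is exactly the associativity of string concatenation, which the product inherits pointwise, together with a one-line counterexample. The only point worth flagging is that the counterexample genuinely needs two distinct symbols, since over a one-letter alphabet every product $\{a^i\} * \{a^j\}$ is symmetric; this is harmless here, as the paper works throughout with nontrivial alphabets $\Sigma$ and $\Omega$.
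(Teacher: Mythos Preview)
Your proposal is correct and matches the paper's approach: associativity is inherited directly from associativity of string concatenation via Definition~\ref{*-1-def}, and non-commutativity is witnessed by an explicit counterexample. The only cosmetic difference is the choice of counterexample---the paper uses $A=\{a\}$ and $B=\{a,b\}$ rather than your two singletons---but either works equally well.
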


\begin{proof}
Associativity follows from the associativity of concatenation.  To see that $*$ is not commutative, consider $A=\{ a \}$ and $B=\{ a,b \}$.  $A*B=\{ aa,ab \}$ and $B*A=\{ aa,ba \}$.
\end{proof}

The following definitions and results pertain to sets of strings and trees over finite alphabets.  


\begin{Definition}\label{anti-def}
Given a set of strings, $S$, we call $P \subseteq T[S]$ a \emph{maximal antichain of $S$} if $(\forall x,y\in P)\big(x\perp y \vee x = y\big)$ and $(\forall x\in S)(\exists y\in P)(y \parallel x)$.  $P$ is a \emph{valid antichain of $S$} if $P$ is a maximal antichain of $S$ and $(\forall x,y \in  P)\big(x^{-1}T[S] = y^{-1}T[S]\big)$.  We define, $\vac(S) = \{P:P \mbox{ is a valid antichain of $S$}\}$.
\end{Definition}



\begin{Example}
Consider the following set of strings over the alphabet $\{a,b\}$:
$$S = \{ a^5,a^4b,a^2ba,a^2b^2,ba^4,ba^3b,baba,bab^2,b^2a^3,b^2a^2b,b^3a,b^4 \}.$$
Graphically, we can represent $S$ as a tree where branching left indicates an $a$ and branching right indicates a $b$.  In the picture below to the right, we highlight the four valid antichains of $S$: $P_0 = \{ \lambda \}$, $P_1 = \{ a^2,ba,b^2 \}$, $P_2 = \{ a^4,a^2b,ba^3,bab,b^2a^2,b^3 \}$ and $P_3 = S$.  Note that $S$ is only a valid antichain of itself because it contains no comparable strings.  The members of the four valid antichains are connected via dotted lines in the right picture ($P_0$ has only one member and therefore includes no dotted lines).  For reference a maximal antichain that is not valid is included in the picture on the left and its members are joined with a dotted line.
\begin{figure}[H]
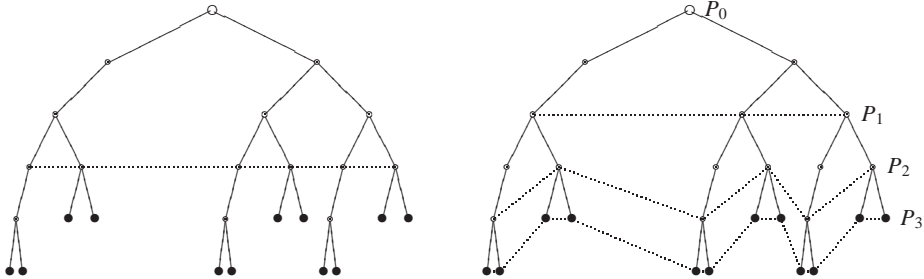

\centering
\resizebox{12.5cm}{!}{
\xy
(-35,0)*++{\xy (0,0)*+[o]=<2pt>\cir<2pt>{}="0-0";
(16,-8)*+[o]=<2pt>\hbox{}*\frm{oo}="0-1";
(-16,-8)*+[o]=<2pt>\hbox{}*\frm{oo}="1-1";
(24,-16)*+[o]=<2pt>\hbox{}*\frm{oo}="0-2";
(8,-16)*+[o]=<2pt>\hbox{}*\frm{oo}="1-2";
(-24,-16)*+[o]=<2pt>\hbox{}*\frm{oo}="3-2";
(28,-24)*+[o]=<2pt>\hbox{}*\frm{oo}="0-3";
(20,-24)*+[o]=<2pt>\hbox{}*\frm{oo}="1-3";
(12,-24)*+[o]=<2pt>\hbox{}*\frm{oo}="2-3";
(4,-24)*+[o]=<2pt>\hbox{}*\frm{oo}="3-3";
(-20,-24)*+[o]=<2pt>\hbox{}*\frm{oo}="6-3";
(-28,-24)*+[o]=<2pt>\hbox{}*\frm{oo}="7-3";
(30,-32)*+[o]=<2pt>\hbox{\textbullet}*\frm{oo}="0-4";
(26,-32)*+[o]=<2pt>\hbox{\textbullet}*\frm{oo}="1-4";
(18,-32)*+[o]=<2pt>\hbox{}*\frm{oo}="3-4";
(14,-32)*+[o]=<2pt>\hbox{\textbullet}*\frm{oo}="4-4";
(10,-32)*+[o]=<2pt>\hbox{\textbullet}*\frm{oo}="5-4";
(2,-32)*+[o]=<2pt>\hbox{}*\frm{oo}="7-4";
(-18,-32)*+[o]=<2pt>\hbox{\textbullet}*\frm{oo}="12-4";
(-22,-32)*+[o]=<2pt>\hbox{\textbullet}*\frm{oo}="13-4";
(-30,-32)*+[o]=<2pt>\hbox{}*\frm{oo}="15-4";
(19,-40)*+[o]=<2pt>\hbox{\textbullet}*\frm{oo}="6-5";
(17,-40)*+[o]=<2pt>\hbox{\textbullet}*\frm{oo}="7-5";
(3,-40)*+[o]=<2pt>\hbox{\textbullet}*\frm{oo}="14-5";
(1,-40)*+[o]=<2pt>\hbox{\textbullet}*\frm{oo}="15-5";
(-29,-40)*+[o]=<2pt>\hbox{\textbullet}*\frm{oo}="30-5";
(-31,-40)*+[o]=<2pt>\hbox{\textbullet}*\frm{oo}="31-5";
"0-0";"0-1"**\dir{-};
"0-0";"1-1"**\dir{-};
"0-1";"0-2"**\dir{-};
"0-1";"1-2"**\dir{-};
"1-1";"3-2"**\dir{-};
"0-2";"0-3"**\dir{-};
"0-2";"1-3"**\dir{-};
"1-2";"2-3"**\dir{-};
"1-2";"3-3"**\dir{-};
"3-2";"6-3"**\dir{-};
"3-2";"7-3"**\dir{-};
"0-3";"0-4"**\dir{-};
"0-3";"1-4"**\dir{-};
"1-3";"3-4"**\dir{-};
"2-3";"4-4"**\dir{-};
"2-3";"5-4"**\dir{-};
"3-3";"7-4"**\dir{-};
"6-3";"12-4"**\dir{-};
"6-3";"13-4"**\dir{-};
"7-3";"15-4"**\dir{-};
"3-4";"6-5"**\dir{-};
"3-4";"7-5"**\dir{-};
"7-4";"14-5"**\dir{-};
"7-4";"15-5"**\dir{-};
"15-4";"30-5"**\dir{-};
"15-4";"31-5"**\dir{-};
%
"0-3";"1-3"**\dir{.};
"1-3";"2-3"**\dir{.};
"2-3";"3-3"**\dir{.};
"3-3";"6-3"**\dir{.};
"6-3";"7-3"**\dir{.};
\endxy }="G1";
(40,0)*++{\xy (0,0)*+[o]=<2pt>\cir<2pt>{}="0-0";
(4,0)*+[o]=<2pt>\hbox{$P_0$}="0-0-caption";
(16,-8)*+[o]=<2pt>\hbox{}*\frm{oo}="0-1";
(-16,-8)*+[o]=<2pt>\hbox{}*\frm{oo}="1-1";
(24,-16)*+[o]=<2pt>\hbox{}*\frm{oo}="0-2";
(28,-16)*+[o]=<2pt>\hbox{$P_1$}="0-2-caption";
(8,-16)*+[o]=<2pt>\hbox{}*\frm{oo}="1-2";
(-24,-16)*+[o]=<2pt>\hbox{}*\frm{oo}="3-2";
(28,-24)*+[o]=<2pt>\hbox{}*\frm{oo}="0-3";
(32,-24)*+[o]=<2pt>\hbox{$P_2$}="0-3-caption";
(20,-24)*+[o]=<2pt>\hbox{}*\frm{oo}="1-3";
(12,-24)*+[o]=<2pt>\hbox{}*\frm{oo}="2-3";
(4,-24)*+[o]=<2pt>\hbox{}*\frm{oo}="3-3";
(-20,-24)*+[o]=<2pt>\hbox{}*\frm{oo}="6-3";
(-28,-24)*+[o]=<2pt>\hbox{}*\frm{oo}="7-3";
(30,-32)*+[o]=<2pt>\hbox{\textbullet}*\frm{oo}="0-4";
(34,-32)*+[o]=<2pt>\hbox{$P_3$}="0-4-caption";
(26,-32)*+[o]=<2pt>\hbox{\textbullet}*\frm{oo}="1-4";
(18,-32)*+[o]=<2pt>\hbox{}*\frm{oo}="3-4";
(14,-32)*+[o]=<2pt>\hbox{\textbullet}*\frm{oo}="4-4";
(10,-32)*+[o]=<2pt>\hbox{\textbullet}*\frm{oo}="5-4";
(2,-32)*+[o]=<2pt>\hbox{}*\frm{oo}="7-4";
(-18,-32)*+[o]=<2pt>\hbox{\textbullet}*\frm{oo}="12-4";
(-22,-32)*+[o]=<2pt>\hbox{\textbullet}*\frm{oo}="13-4";
(-30,-32)*+[o]=<2pt>\hbox{}*\frm{oo}="15-4";
(19,-40)*+[o]=<2pt>\hbox{\textbullet}*\frm{oo}="6-5";
(17,-40)*+[o]=<2pt>\hbox{\textbullet}*\frm{oo}="7-5";
(3,-40)*+[o]=<2pt>\hbox{\textbullet}*\frm{oo}="14-5";
(1,-40)*+[o]=<2pt>\hbox{\textbullet}*\frm{oo}="15-5";
(-29,-40)*+[o]=<2pt>\hbox{\textbullet}*\frm{oo}="30-5";
(-31,-40)*+[o]=<2pt>\hbox{\textbullet}*\frm{oo}="31-5";
"0-0";"0-1"**\dir{-};
"0-0";"1-1"**\dir{-};
"0-1";"0-2"**\dir{-};
"0-1";"1-2"**\dir{-};
"1-1";"3-2"**\dir{-};
"0-2";"0-3"**\dir{-};
"0-2";"1-3"**\dir{-};
"1-2";"2-3"**\dir{-};
"1-2";"3-3"**\dir{-};
"3-2";"6-3"**\dir{-};
"3-2";"7-3"**\dir{-};
"0-3";"0-4"**\dir{-};
"0-3";"1-4"**\dir{-};
"1-3";"3-4"**\dir{-};
"2-3";"4-4"**\dir{-};
"2-3";"5-4"**\dir{-};
"3-3";"7-4"**\dir{-};
"6-3";"12-4"**\dir{-};
"6-3";"13-4"**\dir{-};
"7-3";"15-4"**\dir{-};
"3-4";"6-5"**\dir{-};
"3-4";"7-5"**\dir{-};
"7-4";"14-5"**\dir{-};
"7-4";"15-5"**\dir{-};
"15-4";"30-5"**\dir{-};
"15-4";"31-5"**\dir{-};
%
"0-2";"1-2"**\dir{.};
"1-2";"3-2"**\dir{.};
"0-3";"3-4"**\dir{.};
"3-4";"2-3"**\dir{.};
"2-3";"7-4"**\dir{.};
"7-4";"6-3"**\dir{.};
"6-3";"15-4"**\dir{.};
"0-4";"1-4"**\dir{.};
"1-4";"6-5"**\dir{.};
"6-5";"7-5"**\dir{.};
"7-5";"4-4"**\dir{.};
"4-4";"5-4"**\dir{.};
"5-4";"14-5"**\dir{.};
"14-5";"15-5"**\dir{.};
"15-5";"12-4"**\dir{.};
"12-4";"13-4"**\dir{.};
"13-4";"30-5"**\dir{.};
"30-5";"31-5"**\dir{.};
\endxy }="G1";
\endxy
}
\caption{On the left, a maximal antichain that is not valid; on the right, all the valid antichains.}
\end{figure}
%
%
In the next figure, we focus on the valid antichain $P_1$.
\begin{figure}[H]
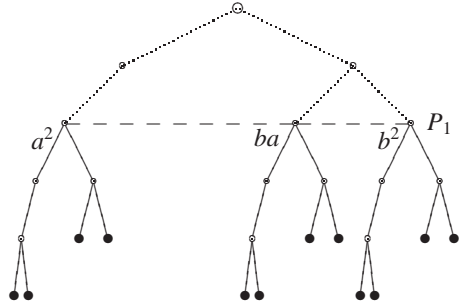

\centering
\resizebox{6cm}{!}{
\xy
(0,0)*+[o]=<2pt>\cir<2pt>{}="0-0";
(16,-8)*+[o]=<2pt>\hbox{}*\frm{oo}="0-1";
(-16,-8)*+[o]=<2pt>\hbox{}*\frm{oo}="1-1";
(24,-16)*+[o]=<2pt>\hbox{}*\frm{oo}="0-2";
(28,-16)*+[o]=<2pt>\hbox{$P_1$}="0-2-caption";
(21,-18)*+[o]=<2pt>\hbox{$b^2$}="0-2-caption2";
(8,-16)*+[o]=<2pt>\hbox{}*\frm{oo}="1-2";
(4,-18)*+[o]=<2pt>\hbox{$ba$}="1-2-caption2";
(-24,-16)*+[o]=<2pt>\hbox{}*\frm{oo}="3-2";
(-27,-18)*+[o]=<2pt>\hbox{$a^2$}="2-2-caption2";
(28,-24)*+[o]=<2pt>\hbox{}*\frm{oo}="0-3";
(20,-24)*+[o]=<2pt>\hbox{}*\frm{oo}="1-3";
(12,-24)*+[o]=<2pt>\hbox{}*\frm{oo}="2-3";
(4,-24)*+[o]=<2pt>\hbox{}*\frm{oo}="3-3";
(-20,-24)*+[o]=<2pt>\hbox{}*\frm{oo}="6-3";
(-28,-24)*+[o]=<2pt>\hbox{}*\frm{oo}="7-3";
(30,-32)*+[o]=<2pt>\hbox{\textbullet}*\frm{oo}="0-4";
(26,-32)*+[o]=<2pt>\hbox{\textbullet}*\frm{oo}="1-4";
(18,-32)*+[o]=<2pt>\hbox{}*\frm{oo}="3-4";
(14,-32)*+[o]=<2pt>\hbox{\textbullet}*\frm{oo}="4-4";
(10,-32)*+[o]=<2pt>\hbox{\textbullet}*\frm{oo}="5-4";
(2,-32)*+[o]=<2pt>\hbox{}*\frm{oo}="7-4";
(-18,-32)*+[o]=<2pt>\hbox{\textbullet}*\frm{oo}="12-4";
(-22,-32)*+[o]=<2pt>\hbox{\textbullet}*\frm{oo}="13-4";
(-30,-32)*+[o]=<2pt>\hbox{}*\frm{oo}="15-4";
(19,-40)*+[o]=<2pt>\hbox{\textbullet}*\frm{oo}="6-5";
(17,-40)*+[o]=<2pt>\hbox{\textbullet}*\frm{oo}="7-5";
(3,-40)*+[o]=<2pt>\hbox{\textbullet}*\frm{oo}="14-5";
(1,-40)*+[o]=<2pt>\hbox{\textbullet}*\frm{oo}="15-5";
(-29,-40)*+[o]=<2pt>\hbox{\textbullet}*\frm{oo}="30-5";
(-31,-40)*+[o]=<2pt>\hbox{\textbullet}*\frm{oo}="31-5";
"0-0";"0-1"**\dir{.};
"0-0";"1-1"**\dir{.};
"0-1";"0-2"**\dir{.};
"0-1";"1-2"**\dir{.};
"1-1";"3-2"**\dir{.};
"0-2";"0-3"**\dir{-};
"0-2";"1-3"**\dir{-};
"1-2";"2-3"**\dir{-};
"1-2";"3-3"**\dir{-};
"3-2";"6-3"**\dir{-};
"3-2";"7-3"**\dir{-};
"0-3";"0-4"**\dir{-};
"0-3";"1-4"**\dir{-};
"1-3";"3-4"**\dir{-};
"2-3";"4-4"**\dir{-};
"2-3";"5-4"**\dir{-};
"3-3";"7-4"**\dir{-};
"6-3";"12-4"**\dir{-};
"6-3";"13-4"**\dir{-};
"7-3";"15-4"**\dir{-};
"3-4";"6-5"**\dir{-};
"3-4";"7-5"**\dir{-};
"7-4";"14-5"**\dir{-};
"7-4";"15-5"**\dir{-};
"15-4";"30-5"**\dir{-};
"15-4";"31-5"**\dir{-};
%
"0-2";"1-2"**\dir{--};
"1-2";"3-2"**\dir{--};
\endxy
}
\caption{The identical subtrees below the elements of the valid antichain $P_1$.}
\end{figure}
Observe that the portions of the tree below each of $a^2$, $ba$ and $b^2$ are identical; the terminal nodes of all three sub-trees are $\{ a^3,a^2b,ab,b^2 \}$.  It is this equivalence of suffixes that makes $P_1$ a valid antichain.
\end{Example}

The concept of equivalence we have developed closely parallels that of Nerode equivalence \cite{nerode} in which two strings in a language are equivalent if there is no extension in the language that distinguishes the two strings.

It is interesting to note that the valid antichains in the above example have a natural linear ordering.  As we shall see in Theorem \ref{ac-linear}, this is not an artifact of the particular example, but is true of any finite set $S$.


\begin{Proposition}\label{prefix-of-prefix}
Suppose that $P$ is a valid antichain of a set of strings $S$ and $Q$ is a valid antichain of $P$, then $Q$ is a valid antichain of~$S$.
\end{Proposition}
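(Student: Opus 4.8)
The plan is to verify, one by one, the four clauses that Definition \ref{anti-def} requires of $Q$ relative to $S$: that $Q\subseteq T[S]$, that $Q$ is an antichain, that every $s\in S$ is comparable to a member of $Q$, and that $T[S]_x=T[S]_y$ for all $x,y\in Q$. The first two are immediate from the hypotheses. Since $P\subseteq T[S]$ and $T[S]$ is prefix closed, the prefix closure $T[P]$ satisfies $T[P]\subseteq T[S]$; as $Q\subseteq T[P]$ (because $Q$ is an antichain \emph{of $P$}), we get $Q\subseteq T[S]$. That the elements of $Q$ are pairwise incomparable is part of the hypothesis that $Q$ is a valid antichain of $P$.

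For the maximality clause I would first record the key positional fact: whenever $q\in Q$ is comparable to $p\in P$, in fact $q\preceq p$. Indeed $q\in T[P]$ forces $q\preceq p'$ for some $p'\in P$, so $p\prec q$ would give $p\prec p'$, contradicting that $P$ is an antichain. Now fix $s\in S$. Maximality of $P$ over $S$ yields $p\in P$ with $p\parallel s$, and maximality of $Q$ over $P$ yields $q\in Q$ with $q\parallel p$, whence $q\preceq p$ by the previous remark. If $p\preceq s$ then $q\preceq p\preceq s$; if instead $s\preceq p$ then $q$ and $s$ are both prefixes of $p$ and hence comparable. Either way $q\parallel s$, as needed.

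The real work is the future clause, and this is the step I expect to be the main obstacle. Writing $F$ for the common value $T[S]_p$ ($p\in P$) guaranteed by validity of $P$, I would establish the decomposition
\[
 T[S]_q \;=\; T[P]_q \,\cup\, \bigcup_{w:\,qw\in P} w\cdot F ,
\]
valid for every $q\in Q$, where $w\cdot F=\{wv:v\in F\}$. The inclusion $\supseteq$ is routine, using that $qw\in P$ gives $T[S]_{qw}=F$ together with the identity $(T[S]_q)_w=T[S]_{qw}$. For $\subseteq$, take $z$ with $qz\in T[S]$, pick $s\in S$ with $qz\preceq s$, and locate $p\in P$ comparable to $s$; a short case analysis (using again that members of $Q$ sit below members of $P$) shows that either $qz\preceq p\in P$, so $z\in T[P]_q$, or $qw=p$ for some prefix $w\preceq z$, whereupon $z=wv$ with $v\in T[S]_p=F$. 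Finally I would observe that $\{w:qw\in P\}$ is exactly the set of maximal elements of the tree $T[P]_q$, so the entire right-hand side is a function of $T[P]_q$ and $F$ alone. Since $T[P]_q$ is independent of $q\in Q$ (validity of $Q$ as an antichain of $P$) and $F$ is independent of $p\in P$ (validity of $P$), the displayed set does not depend on the chosen $q$, giving $T[S]_x=T[S]_y$ for all $x,y\in Q$ and completing the proof.
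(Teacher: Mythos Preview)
Your proof is correct. The paper's own ``proof'' is a single sentence: it asserts that the proposition follows immediately from Definition~\ref{anti-def} and gives no further argument. You have taken a genuinely more explicit route by actually verifying each clause, and in particular by working out the validity clause via the decomposition $T[S]_q = T[P]_q \cup \bigcup_{w:\,qw\in P} w\cdot F$ and the observation that the index set $\{w:qw\in P\}$ is recoverable from $T[P]_q$ as its set of $\preceq$-maximal elements. This last step is the only place where any real thought is required, and the paper simply suppresses it; your argument makes transparent \emph{why} the claim holds rather than just asserting that it does. The trade-off is length: the paper treats the result as a warm-up observation en route to Theorem~\ref{ac-linear}, whereas your write-up is closer to a full lemma. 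Either is acceptable, but yours is the honest version.
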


\begin{proof}
Let $P$ be a valid antichain of a set of strings $S$ and let $Q$ be a valid antichain of $P$.  Every member of $T[S]$ is either a prefix or an extension of a member of $P$.  Since $P$ consists of incomparable strings, each member of $P$ has a member of $Q$ as a prefix.  Thus, $Q$ is a maximal antichain of $S$.  To see that $Q$ is a valid antichain, observe that if $x,y \in Q$, then $x^{-1}T[P] = y^{-1}T[P]$.  Since $z^{-1}T[S] = w^{-1}T[S]$ for all $z,w \in P$, $x^{-1}T[S] = y^{-1}T[S]$, thus $Q$ is a valid antichain.
\end{proof}


\begin{Definition}
For $P$ and $Q$, sets of strings over some common alphabet, we say that $P <_{ac} Q$ ($P$ is ``antichain less than" $Q$) if either
\begin{itemize}
\item $|P| < |Q|$, or
\item $|P| = |Q|$ and, for all $x\in P$ and $y\in Q$, if $x \parallel y$, then $x\prec y$.
\end{itemize}
\end{Definition}

We will use valid antichains to parse a set of strings as one would parse a single string into a prefix and suffix.  The validity of an antichain ensures that the corresponding suffix set is well-defined.  
%
%
%
%
%
%
%

\begin{Proposition}\label{p*ps}
Let $S$ be a finite set of incomparable strings.  If $P$ is a valid antichain of $S$, then $P * (P^{-1}S) = S$.
\end{Proposition}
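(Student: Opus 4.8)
The plan is to prove the two inclusions $S \subseteq P*(P^{-1}S)$ and $P*(P^{-1}S) \subseteq S$ separately, with the whole argument resting on one structural fact: since the members of $S$ are pairwise incomparable, the elements of $S$ are precisely the leaves (the $\preceq$-maximal nodes, i.e. the terminal nodes) of the tree $T[S]$. I would first record this as a preliminary observation and then relativise it: a string $w$ lies in $S$ iff $w \in T[S]$ and no proper extension of $w$ lies in $T[S]$, and consequently, for any $x \in T[S]$ and any string $z$, one has $xz \in S$ iff $z$ is a leaf of the subtree $T[S]_x$. This is the lever that converts statements about the prefix-closed tree $T[S]$ — which is all the validity condition $T[S]_x = T[S]_y$ directly controls — into statements about membership in $S$ itself.

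For $S \subseteq P*(P^{-1}S)$, take $w \in S$. Since $P$ is a maximal antichain, maximality supplies some $x \in P$ with $x \parallel w$. I would first rule out $w \prec x$: as $x \in P \subseteq T[S]$, there is $s \in S$ with $x \preceq s$, so $w \prec x \preceq s$ would give $w \prec s$ with $w,s \in S$, contradicting the incomparability of the members of $S$. Hence $x \preceq w$, so $w = xy$ for some string $y$. Taking this same $x$ as witness, $xy = w \in S$ shows $y \in P^{-1}S$, and since $x \in P$ we conclude $w = xy \in P*(P^{-1}S)$.

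The reverse inclusion is where the real work lies. Take $x'y \in P*(P^{-1}S)$ with $x' \in P$ and $y \in P^{-1}S$; by definition of $P^{-1}S$ there is some (possibly different) $x \in P$ with $xy \in S$. Validity of $P$ gives $T[S]_x = T[S]_{x'}$, and from $xy \in S \subseteq T[S]$ we get $y \in T[S]_x = T[S]_{x'}$, hence $x'y \in T[S]$. The main obstacle — and the crux of the proposition — is upgrading $x'y \in T[S]$ to $x'y \in S$: validity only equates the two subtrees as prefix-closed sets, so a priori $x'y$ could be an interior node of $T[S]$ rather than a member of $S$. Here the preliminary observation closes the gap: $xy \in S$ says $y$ is a leaf of $T[S]_x$; since $T[S]_x = T[S]_{x'}$ as sets, they have the same leaves, so $y$ is a leaf of $T[S]_{x'}$ as well, which is exactly $x'y \in S$.

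The step I expect to be delicate is precisely this passage from the tree back to $S$, together with the symmetric step (ruling out $w \prec x$) in the forward direction; both silently use that the members of $S$ are pairwise incomparable, i.e. that $S$ is an antichain, which is exactly the setting in which this machinery is applied to SDT output sets. I would therefore make the leaf characterization explicit at the outset rather than leave it implicit, since without incomparability of the members of $S$ one can exhibit a valid antichain $P$ for which $P*(P^{-1}S)$ omits an element of $S$ that happens to sit strictly below some element of $P$.
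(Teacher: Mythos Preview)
Your argument is correct and considerably more complete than the paper's, which consists of a single sentence: ``Observe that, if $P$ is a valid antichain of $S$, then $T[P^{-1}S] = T[S]_x$ for all $x \in P$.'' That observation is true, and your leaf characterization is precisely what is needed to pass from it to the stated equality of sets; the paper simply elides this step. You are also right to flag the hidden hypothesis that the members of $S$ are pairwise incomparable: without it the proposition fails (your own scenario, e.g.\ $S=\{a,ab\}$ with $P=\{ab\}$, gives $P*(P^{-1}S)=\{ab\}\neq S$; one can also get spurious extra elements, e.g.\ $S=\{a,ab,bb\}$ with $P=\{a,b\}$ yields $P*(P^{-1}S)=\{a,b,ab,bb\}$). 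The paper's one-line proof does not address this at all, so your explicit use of the antichain assumption is an improvement rather than an addition of an unnecessary hypothesis.
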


\begin{proof}
Observe that, if $P$ is a valid antichain of $S$, then $T[P^{-1}S] = x^{-1}T[S]$ for all $x \in P$.  
\end{proof}

The antichain ordering ($<_{ac}$) has particularly nice properties when applied to $\vac(S)$, where $S$ is a finite set of strings.

\begin{Proposition}\label{comparable}
If $P$ and $Q$ are maximal antichains of the same finite set of strings, then there is a relation $R \subseteq P \times Q$ such that
\begin{itemize}
\item $\mbox{dom}(R) = P$,
\item $\mbox{ran}(R) = Q$,
\item $xRy \leftrightarrow x \parallel y$.
\end{itemize}

Furthermore, if $|P| = |Q|$ and $P \parallel_{ac} Q$, then $R$ is a well-defined and bijective function.
\end{Proposition}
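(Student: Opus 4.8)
The plan is to take for $R$ the comparability relation itself, $R = \{\langle x,y\rangle \in P\times Q : x \parallel y\}$, so that the third bullet holds by definition; everything then reduces to two surjectivity statements and, under $|P|=|Q|$, a functionality statement.

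First I would prove $\mbox{dom}(R)=P$. Fix $x\in P$. Since $P\subseteq T[S]$, there is $s\in S$ with $x\preceq s$. Maximality of $Q$ (the covering clause of Definition \ref{anti-def}) supplies $y\in Q$ with $y\parallel s$. If $y\preceq s$, then $x$ and $y$ are two prefixes of the common string $s$, hence comparable; if $s\prec y$, then $x\preceq s\prec y$ gives $x\prec y$. Either way $x\parallel y$, so $x\in\mbox{dom}(R)$. The identical argument with the roles of $P$ and $Q$ exchanged gives $\mbox{ran}(R)=Q$. These two steps use only maximality and are routine.

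The substance is the final clause. I would begin with the following dichotomy: if $x\in P$ is comparable with two distinct $y_1,y_2\in Q$, then $x\prec y_1$ and $x\prec y_2$. Indeed $y_1\perp y_2$ because $Q$ is an antichain, and a short case check rules out every configuration in which some $y_i\preceq x$ (two prefixes of $x$ would be comparable, and a mixed configuration $y_1\preceq x\preceq y_2$ would make $y_1,y_2$ comparable), leaving only $x\prec y_1$ and $x\prec y_2$. Thus the only way $R$ can fail to be a function is for some antichain element to be a proper prefix of several members of the opposite antichain, and symmetrically for $R^{-1}$. Having localized the obstruction, I would attach to each $x\in P$ the set of $\preceq$-maximal strings of $T[S]$ lying above $x$, obtaining a partition of this leaf set indexed by $P$, and likewise for $Q$; comparability of $x$ and $y$ is then exactly the condition that their leaf-blocks meet, and since all blocks come from nodes of a single tree the whole family is laminar (any two blocks are nested or disjoint).

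The hard part is to convert $|P|=|Q|$ into a perfect matching of these two partitions. A naive incidence count is \emph{not} enough: it only shows that the total number of comparable pairs seen from the $P$ side equals that seen from the $Q$ side, which is consistent with splittings occurring on both sides simultaneously. The argument must therefore exploit the rigidity of the suffix trees $T[S]_x$ below the members of the antichains. In the setting where this proposition is applied the antichains are \emph{valid}, so by Definition \ref{anti-def} the tree $T[S]_x$ is the same for every member $x$ of a given antichain; I expect to use this homogeneity to show that a splitting below one member must be replicated below every member, forcing $|P|$ and $|Q|$ apart unless no splitting occurs at all. Ruling out splitting makes every leaf-block of $P$ coincide with a leaf-block of $Q$, whence $R$ is the graph of a bijection and, in particular, a well-defined injective function. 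I expect this replication-and-count step to be the main obstacle, and the place where validity, rather than mere maximality, is what the argument actually needs.
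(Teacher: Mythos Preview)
Your construction of $R$ and the verification that $\mbox{dom}(R)=P$ and $\mbox{ran}(R)=Q$ match the paper's proof exactly; the paper is terser, but the content is the same.

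Where you diverge from the paper is on the ``furthermore'' clause, and here you have in fact out-thought the paper. The paper's proof simply \emph{stops} after establishing the three bulleted items; it never addresses the claim that $|P|=|Q|$ forces $R$ to be a bijection. Your instinct that this clause cannot be proved from maximality alone is correct, and your diagnosis that validity is what is actually doing the work is also correct. A concrete counterexample: take $S=\{aa,ab,ba,bb\}$, $P=\{a,ba,bb\}$, $Q=\{aa,ab,b\}$. Both are maximal antichains of $S$ with $|P|=|Q|=3$, yet $a$ is comparable with both $aa$ and $ab$, and $b$ is comparable with both $ba$ and $bb$, so $R$ is neither single-valued nor injective. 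Thus the proposition as stated is false; the paper's proof is not merely incomplete but could not be completed under the stated hypotheses.

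What rescues the paper is that the proposition is only ever invoked inside Theorem~\ref{ac-linear}, where $P$ and $Q$ are assumed to lie in $\vac(S)$. Under validity the ``replication'' idea you sketch goes through cleanly: if $x\in P$ were a proper prefix of two distinct $y_1,y_2\in Q$, then $T[S]_x$ would strictly contain $T[S]_{y_1}$, and since $T[S]_{x'}=T[S]_x$ for every $x'\in P$, each $x'$ would likewise sit strictly above at least two members of $Q$, forcing $|Q|\ge 2|P|$. So your plan is the right one; you should simply state up front that you are proving the claim for \emph{valid} antichains, note the counterexample to the literal statement, and then carry out the replication argument you outlined.
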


\begin{proof}
Define $R = \{ \langle x,y \rangle : x \in P \wedge y \in Q \wedge x \parallel y \}$.  Since $P$ and $Q$ are maximal antichains, for each $x \in P$ there is $y \in Q$ such that $x \parallel y$ hence, $\mbox{dom}(R) \supseteq P$.  Similarly, for each $y \in Q$ there is an $x \in P$ such that $x \parallel y$ thus, $\mbox{ran}(R) \supseteq Q$.  By the definition of $R$, $\mbox{dom}(R) \subseteq P$, $\mbox{ran}(R) \subseteq Q$ and $xRy \leftrightarrow x \parallel y$.  If $|P| = |Q|$ and $P \parallel_{ac} Q$, then for each $x \in P$ there is a unique comparable $y \in Q$ and vice versa.  Consequently, $R$ is well-defined and bijective in this case.
\end{proof}

\begin{Theorem}\label{ac-linear}
If $S$ is a finite set of strings, then $\Big(\vac(S), <_{ac}\Big)$ is a finite linear order.
\end{Theorem}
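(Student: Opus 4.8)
The plan is to treat finiteness and the order axioms separately, with essentially all the work going into the case of two valid antichains of equal cardinality. First I would observe that finiteness is immediate: since $S$ is finite, its prefix closure $T[S]$ is a finite tree, so $\mathscr P(T[S])$ is finite, and every valid antichain is by definition a subset of $T[S]$, whence $\vac(S) \subseteq \mathscr P(T[S])$ is finite. (I assume $S \neq \emptyset$; otherwise $\vac(S) = \{\emptyset\}$ and the claim is trivial.) It then remains to show that $<_{ac}$ is irreflexive, transitive and total on $\vac(S)$. Irreflexivity is easy: $P <_{ac} P$ would force $|P| = |P|$ together with $x \prec x$ for some $x \in P$ (taking $x = y$, which are comparable), which is false.

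The heart of the argument is the following observation about two valid antichains $P, Q$ of $S$ with $|P| = |Q|$. By Proposition \ref{comparable} there is a comparability bijection $\phi : P \to Q$, i.e.\ each $x \in P$ is comparable to exactly one element $\phi(x) \in Q$, and every comparable pair has this form. For $x \in P$ write $\tau_P := T[S]_x$; by validity this subtree is the same for all $x \in P$, and likewise $\tau_Q := T[S]_y$ is common to all $y \in Q$. I would show that the three possibilities $x \prec \phi(x)$, $x = \phi(x)$, $x \succ \phi(x)$ cannot be mixed. Indeed, if $x \prec \phi(x)$, writing $\phi(x) = xu$ with $u \neq \lambda$ gives $\tau_Q = (\tau_P)_u$; symmetrically $x \succ \phi(x)$ gives $\tau_P = (\tau_Q)_v$ with $v \neq \lambda$; and $x = \phi(x)$ gives $\tau_P = \tau_Q$. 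Combining a $\prec$-pair with a $\succ$-pair yields $\tau_P = (\tau_P)_{uv}$ with $uv \neq \lambda$, and combining a fixed point with a $\prec$-pair yields $\tau_P = (\tau_P)_u$ with $u \neq \lambda$; both contradict the fact that a finite tree $\tau$ satisfies $\tau_w \neq \tau$ for every $w \neq \lambda$ (the maximal string length strictly decreases under $\tau \mapsto \tau_w$). Hence either every $x$ is a fixed point, forcing $P = Q$, or all comparable pairs strictly agree in direction, giving exactly one of $P <_{ac} Q$ or $Q <_{ac} P$. This is the step I expect to be the main obstacle, since it is the point at which validity and finiteness must be used together.

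With this in hand, totality follows: given distinct $P, Q \in \vac(S)$, if $|P| \neq |Q|$ then the one of smaller cardinality is $<_{ac}$ the other by the cardinality clause, and if $|P| = |Q|$ the observation above gives comparability. For transitivity, suppose $P <_{ac} Q$ and $Q <_{ac} R$. If any of the cardinalities differ then $|P| < |R|$ and we are done by the cardinality clause, so assume $|P| = |Q| = |R|$. Then $x \prec \phi(x)$ for all $x \in P$ and $y \prec \psi(y)$ for all $y \in Q$, where $\phi, \psi$ are the comparability bijections; hence $x \prec \psi(\phi(x))$, so $x$ is comparable to $\psi(\phi(x)) \in R$, which by Proposition \ref{comparable} (applied to $P$ and $R$) must be the unique element of $R$ comparable to $x$. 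As every comparable $P$--$R$ pair arises this way and satisfies $x \prec \psi(\phi(x))$, we conclude $P <_{ac} R$. Together with finiteness, irreflexivity, transitivity and totality, this shows $\big(\vac(S), <_{ac}\big)$ is a finite linear order.
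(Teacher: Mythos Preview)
Your proof is correct and follows essentially the same approach as the paper. The core step---using Proposition~\ref{comparable} to get a comparability bijection when $|P|=|Q|$, then invoking validity ($T[S]_x$ constant on each antichain) together with finiteness of $T[S]$ to force all comparable pairs to have the same orientation---is exactly what the paper does; you frame it as ``orientations cannot mix,'' while the paper picks one pair and shows it determines all others, but these are the same argument. The only difference is that you spell out irreflexivity, transitivity, and finiteness of $\vac(S)$ explicitly, whereas the paper argues only totality in detail and dispatches the remaining order axioms with ``follow immediately from the definitions.''
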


\begin{proof}
Consider a finite set of strings, $S$, and let $T=T[S]$.  We begin by fixing $P,Q \in \vac(S)$.  We may assume that $|P| = |Q|$; if $|P| \neq |Q|$, then $P <_{ac} Q$ or $Q <_{ac} P$.  We pick an element $x\in P$ and observe that, by Proposition \ref{comparable}, there is a $y\in Q$ such that $x \parallel y$.\par

Suppose that $x=y$ and let $x'$ be any other member of $P$.  By Proposition \ref{comparable}, there is a $y' \in Q$ such that $x' \parallel y'$.  Since $P$ and $Q$ are valid antichains and $x=y$, $x'^{-1}T=x^{-1}T=y^{-1}T=y'^{-1}T$.  Given that $x'\parallel y'$, $T$ is finite and $x'^{-1}T = y'^{-1}T$ we conclude that $x' = y'$.  Now assume $x\prec y$.   In the case $y \prec x$ simply exchange the roles of $x$ and $y$.  As above, we pick $x'\in P$ and any comparable element $y'\in Q$.  Clearly $y^{-1}T$ is a strict subtree of $x^{-1}T$ and hence, $y'^{-1}T$ is a strict subtree of $x'^{-1}T$.  We conclude that~$x' \prec y'$.\par

We have shown that any two members of $\vac(S)$ are comparable.  The remaining order properties follow immediately from the definitions.
\end{proof}

While the proof of Theorem \ref{ac-linear} is quite simple, we highlight it as a theorem because it is the critical result for the applications of valid antichains that follow.  Note that $<_{ac}$ may not be a linear order on an arbitrary collection of maximal antichains. 

\begin{Corollary}
Let $S_0,S_1,S_2, \ldots$ be a sequence of finite sets.  $\bigcap_{i\in \mathbb N} \vac(S_i)$ is linearly ordered under $<_{ac}$.
\end{Corollary}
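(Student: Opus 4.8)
The plan is to reduce this infinite intersection to a single application of Theorem \ref{ac-linear}. The key conceptual point is that $<_{ac}$ is a \emph{fixed} binary relation on sets of strings over a common alphabet; it does not itself depend on which finite set $S$ one happens to be forming antichains of. Thus irreflexivity, asymmetry and transitivity of $<_{ac}$ are intrinsic properties of the relation and are automatically inherited by any subcollection of sets. The entire force of Theorem \ref{ac-linear} lies in the one remaining order axiom, namely \emph{totality}: it tells us that this fixed relation becomes comparable on every pair once we restrict to the specific family $\vac(S)$. So the only thing I must check for the corollary is that totality persists under the intersection.

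First I would record the trivial containment $\bigcap_{i\in\mathbb N}\vac(S_i)\subseteq \vac(S_0)$, which is immediate from the definition of intersection (any fixed index, say $0$, works). By Theorem \ref{ac-linear}, $\big(\vac(S_0),<_{ac}\big)$ is a finite linear order, and in particular any two of its members are $<_{ac}$-comparable. Now take arbitrary $P,Q\in\bigcap_{i\in\mathbb N}\vac(S_i)$. By the containment, $P,Q\in\vac(S_0)$, so $P$ and $Q$ are comparable under $<_{ac}$. Since the restriction of a linear order to a subset is again a linear order, this shows that $\big(\bigcap_{i\in\mathbb N}\vac(S_i),<_{ac}\big)$ is linearly ordered, completing the argument.

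Honestly, there is no real obstacle here beyond recognising that the statement is a corollary of the single-set theorem rather than requiring a new construction; the only edge case worth a word is the possibility that the intersection is empty, in which case the conclusion holds vacuously. I would also note that the choice of index $0$ is inessential—any $\vac(S_i)$ could play the same role—so the intersection in fact inherits a single ordering that is simultaneously compatible with the linear order on each $\vac(S_i)$. It is precisely the cautionary remark following Theorem \ref{ac-linear}, that $<_{ac}$ need not be linear on an arbitrary collection of maximal antichains, which makes the result nontrivial to state: linearity is guaranteed only because the intersection sits inside an individual $\vac(S_i)$ where Theorem \ref{ac-linear} applies.
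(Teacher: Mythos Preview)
Your proposal is correct and follows exactly the same approach as the paper: observe that $\bigcap_{i\in\mathbb N}\vac(S_i)\subseteq\vac(S_0)$, invoke Theorem~\ref{ac-linear} on $\vac(S_0)$, and use that any subset of a linear order is a linear order. The paper's proof is in fact just these two sentences; your additional commentary on why totality is the only axiom at issue and on the vacuous empty case is sound but more than the paper provides.
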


\begin{proof}
Any subset of a linear order is a linear order.  Since $\bigcap_{i\in \mathbb N} \vac(S_i) \subseteq \vac(S_0)$, the claim follows.
\end{proof}

\begin{Definition}
Given a set of strings, $S$, a finite sequence of sets of strings, $P_0, \ldots , P_n$, is a \emph{factorization of $S$} if $S = P_0 * \cdots * P_n$ and $P_i \neq \{ \lambda \}$ for $i \leq n$.  Such a factorization is said to be \emph{maximal} if, for each $i\in \mathbb N$, $\vac(P_i) = \{ \{ \lambda \}, P_i \}$.
\end{Definition}

Note that having $\vac(P_i) = \{ \{ \lambda \}, P_i \}$ for each factor, $P_i$, in a factorization is equivalent to having $P_{i+1}$ be the $<_{ac}$-least non-trivial valid antichain of $P_i^{-1}\cdots P_0^{-1} S$.

\begin{Example}\label{factorizeEx}
We consider the following set of strings:
\begin{align*}
S = \{ &a^5,a^4b,a^3ba^2,a^3bab,a^3b^2a,a^3b^3,aba^2,abab,ab^2a^2,ab^2ab,ab^3a,ab^4,ba^4,\\
&ba^3b,ba^2ba^2,ba^2bab,ba^2b^2a,ba^2b^3,b^2a^2,b^2ab,b^3a^2,b^3ab,b^4a^2,b^4ab,b^5a,b^6 \}.
\end{align*}
In the figure below, we display the tree, $T[S]$, as well as the $<_{ac}$-least non-trivial valid antichain, $P_0 = \{ a,b \}$.
\begin{figure}[H]
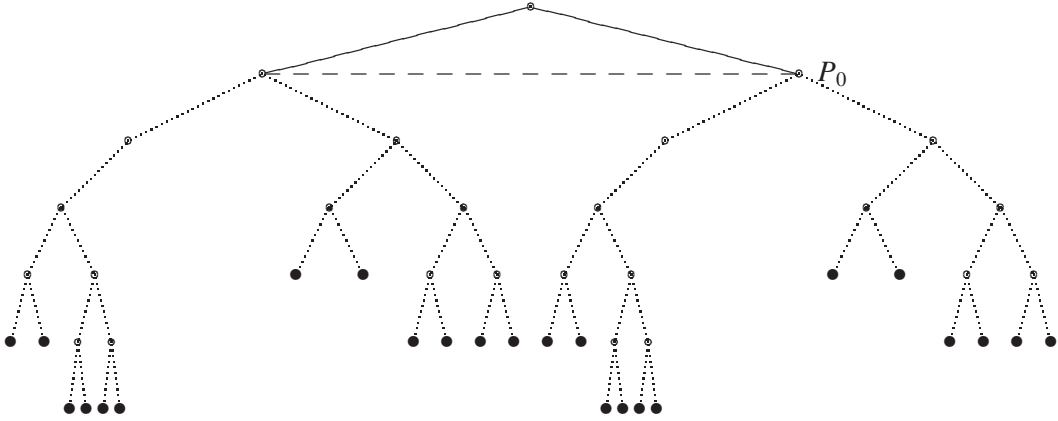

\centering
\resizebox{14cm}{!}{
\xy
(0,0)*+[o]=<2pt>\hbox{}*\frm{oo}="0-0";
(32,-8)*+[o]=<2pt>\hbox{}*\frm{oo}="0-1";
(36,-8)*+[o]=<2pt>\hbox{$P_0$}="0-2-caption";
(-32,-8)*+[o]=<2pt>\hbox{}*\frm{oo}="1-1";
(48,-16)*+[o]=<2pt>\hbox{}*\frm{oo}="0-2";
(16,-16)*+[o]=<2pt>\hbox{}*\frm{oo}="1-2";
(-16,-16)*+[o]=<2pt>\hbox{}*\frm{oo}="2-2";
(-48,-16)*+[o]=<2pt>\hbox{}*\frm{oo}="3-2";
(56,-24)*+[o]=<2pt>\hbox{}*\frm{oo}="0-3";
(40,-24)*+[o]=<2pt>\hbox{}*\frm{oo}="1-3";
(8,-24)*+[o]=<2pt>\hbox{}*\frm{oo}="3-3";
(-8,-24)*+[o]=<2pt>\hbox{}*\frm{oo}="4-3";
(-24,-24)*+[o]=<2pt>\hbox{}*\frm{oo}="5-3";
(-56,-24)*+[o]=<2pt>\hbox{}*\frm{oo}="7-3";
(60,-32)*+[o]=<2pt>\hbox{}*\frm{oo}="0-4";
(52,-32)*+[o]=<2pt>\hbox{}*\frm{oo}="1-4";
(44,-32)*+[o]=<2pt>\hbox{\textbullet}*\frm{oo}="2-4";
(36,-32)*+[o]=<2pt>\hbox{\textbullet}*\frm{oo}="3-4";
(12,-32)*+[o]=<2pt>\hbox{}*\frm{oo}="6-4";
(4,-32)*+[o]=<2pt>\hbox{}*\frm{oo}="7-4";
(-4,-32)*+[o]=<2pt>\hbox{}*\frm{oo}="8-4";
(-12,-32)*+[o]=<2pt>\hbox{}*\frm{oo}="9-4";
(-20,-32)*+[o]=<2pt>\hbox{\textbullet}*\frm{oo}="10-4";
(-28,-32)*+[o]=<2pt>\hbox{\textbullet}*\frm{oo}="11-4";
(-52,-32)*+[o]=<2pt>\hbox{}*\frm{oo}="14-4";
(-60,-32)*+[o]=<2pt>\hbox{}*\frm{oo}="15-4";
(62,-40)*+[o]=<2pt>\hbox{\textbullet}*\frm{oo}="0-5";
(58,-40)*+[o]=<2pt>\hbox{\textbullet}*\frm{oo}="1-5";
(54,-40)*+[o]=<2pt>\hbox{\textbullet}*\frm{oo}="2-5";
(50,-40)*+[o]=<2pt>\hbox{\textbullet}*\frm{oo}="3-5";
(14,-40)*+[o]=<2pt>\hbox{}*\frm{oo}="12-5";
(10,-40)*+[o]=<2pt>\hbox{}*\frm{oo}="13-5";
(6,-40)*+[o]=<2pt>\hbox{\textbullet}*\frm{oo}="14-5";
(2,-40)*+[o]=<2pt>\hbox{\textbullet}*\frm{oo}="15-5";
(-2,-40)*+[o]=<2pt>\hbox{\textbullet}*\frm{oo}="16-5";
(-6,-40)*+[o]=<2pt>\hbox{\textbullet}*\frm{oo}="17-5";
(-10,-40)*+[o]=<2pt>\hbox{\textbullet}*\frm{oo}="18-5";
(-14,-40)*+[o]=<2pt>\hbox{\textbullet}*\frm{oo}="19-5";
(-50,-40)*+[o]=<2pt>\hbox{}*\frm{oo}="28-5";
(-54,-40)*+[o]=<2pt>\hbox{}*\frm{oo}="29-5";
(-58,-40)*+[o]=<2pt>\hbox{\textbullet}*\frm{oo}="30-5";
(-62,-40)*+[o]=<2pt>\hbox{\textbullet}*\frm{oo}="31-5";
(15,-48)*+[o]=<2pt>\hbox{\textbullet}*\frm{oo}="24-6";
(13,-48)*+[o]=<2pt>\hbox{\textbullet}*\frm{oo}="25-6";
(11,-48)*+[o]=<2pt>\hbox{\textbullet}*\frm{oo}="26-6";
(9,-48)*+[o]=<2pt>\hbox{\textbullet}*\frm{oo}="27-6";
(-49,-48)*+[o]=<2pt>\hbox{\textbullet}*\frm{oo}="56-6";
(-51,-48)*+[o]=<2pt>\hbox{\textbullet}*\frm{oo}="57-6";
(-53,-48)*+[o]=<2pt>\hbox{\textbullet}*\frm{oo}="58-6";
(-55,-48)*+[o]=<2pt>\hbox{\textbullet}*\frm{oo}="59-6";
"0-0";"0-1"**\dir{-};
"0-0";"1-1"**\dir{-};
"0-1";"0-2"**\dir{.};
"0-1";"1-2"**\dir{.};
"1-1";"2-2"**\dir{.};
"1-1";"3-2"**\dir{.};
"0-2";"0-3"**\dir{.};
"0-2";"1-3"**\dir{.};
"1-2";"3-3"**\dir{.};
"2-2";"4-3"**\dir{.};
"2-2";"5-3"**\dir{.};
"3-2";"7-3"**\dir{.};
"0-3";"0-4"**\dir{.};
"0-3";"1-4"**\dir{.};
"1-3";"2-4"**\dir{.};
"1-3";"3-4"**\dir{.};
"3-3";"6-4"**\dir{.};
"3-3";"7-4"**\dir{.};
"4-3";"8-4"**\dir{.};
"4-3";"9-4"**\dir{.};
"5-3";"10-4"**\dir{.};
"5-3";"11-4"**\dir{.};
"7-3";"14-4"**\dir{.};
"7-3";"15-4"**\dir{.};
"0-4";"0-5"**\dir{.};
"0-4";"1-5"**\dir{.};
"1-4";"2-5"**\dir{.};
"1-4";"3-5"**\dir{.};
"6-4";"12-5"**\dir{.};
"6-4";"13-5"**\dir{.};
"7-4";"14-5"**\dir{.};
"7-4";"15-5"**\dir{.};
"8-4";"16-5"**\dir{.};
"8-4";"17-5"**\dir{.};
"9-4";"18-5"**\dir{.};
"9-4";"19-5"**\dir{.};
"14-4";"28-5"**\dir{.};
"14-4";"29-5"**\dir{.};
"15-4";"30-5"**\dir{.};
"15-4";"31-5"**\dir{.};
"12-5";"24-6"**\dir{.};
"12-5";"25-6"**\dir{.};
"13-5";"26-6"**\dir{.};
"13-5";"27-6"**\dir{.};
"28-5";"56-6"**\dir{.};
"28-5";"57-6"**\dir{.};
"29-5";"58-6"**\dir{.};
"29-5";"59-6"**\dir{.};
%
"0-1";"1-1"**\dir{--};
\endxy
}
\caption{A set of strings and its $<_{ac}$-least valid antichain.}
\end{figure}
%
%
%
The corresponding set of suffixes is $P_0^{-1}S = \{ a^4,a^3b,a^2ba^2,a^2bab,a^2b^2a,a^2b^3,ba^2,bab,\allowbreak b^2a^2,\allowbreak b^2ab,\allowbreak b^3a,\allowbreak b^4 \}$.  Iterating, we find the next factor is $P_1 = \{ a^2,b \}$ and its set of suffixes is $(P_0*P_1)^{-1}S = \{ a^2,ab,ba^2,bab,b^2a,b^3 \}$.
\begin{figure}[H]
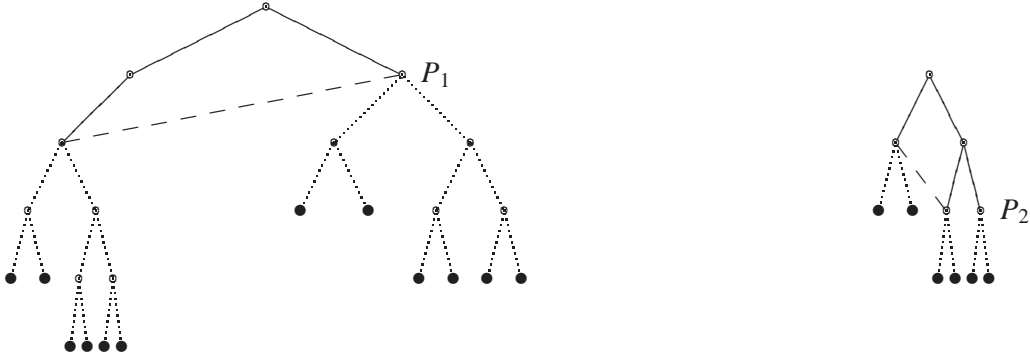

\centering
\resizebox{14cm}{!}{
\xy
(-40,0)*++{\xy (0,0)*+[o]=<2pt>\hbox{}*\frm{oo}="0-0";
(16,-8)*+[o]=<2pt>\hbox{}*\frm{oo}="0-1";
(20,-8)*+[o]=<2pt>\hbox{$P_1$}="0-2-caption";
(-16,-8)*+[o]=<2pt>\hbox{}*\frm{oo}="1-1";
(24,-16)*+[o]=<2pt>\hbox{}*\frm{oo}="0-2";
(8,-16)*+[o]=<2pt>\hbox{}*\frm{oo}="1-2";
(-24,-16)*+[o]=<2pt>\hbox{}*\frm{oo}="3-2";
(28,-24)*+[o]=<2pt>\hbox{}*\frm{oo}="0-3";
(20,-24)*+[o]=<2pt>\hbox{}*\frm{oo}="1-3";
(12,-24)*+[o]=<2pt>\hbox{\textbullet}*\frm{oo}="2-3";
(4,-24)*+[o]=<2pt>\hbox{\textbullet}*\frm{oo}="3-3";
(-20,-24)*+[o]=<2pt>\hbox{}*\frm{oo}="6-3";
(-28,-24)*+[o]=<2pt>\hbox{}*\frm{oo}="7-3";
(30,-32)*+[o]=<2pt>\hbox{\textbullet}*\frm{oo}="0-4";
(26,-32)*+[o]=<2pt>\hbox{\textbullet}*\frm{oo}="1-4";
(22,-32)*+[o]=<2pt>\hbox{\textbullet}*\frm{oo}="2-4";
(18,-32)*+[o]=<2pt>\hbox{\textbullet}*\frm{oo}="3-4";
(-18,-32)*+[o]=<2pt>\hbox{}*\frm{oo}="12-4";
(-22,-32)*+[o]=<2pt>\hbox{}*\frm{oo}="13-4";
(-26,-32)*+[o]=<2pt>\hbox{\textbullet}*\frm{oo}="14-4";
(-30,-32)*+[o]=<2pt>\hbox{\textbullet}*\frm{oo}="15-4";
(-17,-40)*+[o]=<2pt>\hbox{\textbullet}*\frm{oo}="24-5";
(-19,-40)*+[o]=<2pt>\hbox{\textbullet}*\frm{oo}="25-5";
(-21,-40)*+[o]=<2pt>\hbox{\textbullet}*\frm{oo}="26-5";
(-23,-40)*+[o]=<2pt>\hbox{\textbullet}*\frm{oo}="27-5";
"0-0";"0-1"**\dir{-};
"0-0";"1-1"**\dir{-};
"0-1";"0-2"**\dir{.};
"0-1";"1-2"**\dir{.};
"1-1";"3-2"**\dir{-};
"0-2";"0-3"**\dir{.};
"0-2";"1-3"**\dir{.};
"1-2";"2-3"**\dir{.};
"1-2";"3-3"**\dir{.};
"3-2";"6-3"**\dir{.};
"3-2";"7-3"**\dir{.};
"0-3";"0-4"**\dir{.};
"0-3";"1-4"**\dir{.};
"1-3";"2-4"**\dir{.};
"1-3";"3-4"**\dir{.};
"6-3";"12-4"**\dir{.};
"6-3";"13-4"**\dir{.};
"7-3";"14-4"**\dir{.};
"7-3";"15-4"**\dir{.};
"12-4";"24-5"**\dir{.};
"12-4";"25-5"**\dir{.};
"13-4";"26-5"**\dir{.};
"13-4";"27-5"**\dir{.};
%
"0-1";"3-2"**\dir{--};
\endxy }="G0";
%
%
%
(40,0)*++{\xy (0,0)*+[o]=<2pt>\hbox{}*\frm{oo}="0-0";
(4,-8)*+[o]=<2pt>\hbox{}*\frm{oo}="0-1";
(-4,-8)*+[o]=<2pt>\hbox{}*\frm{oo}="1-1";
(6,-16)*+[o]=<2pt>\hbox{}*\frm{oo}="0-2";
(10,-16)*+[o]=<2pt>\hbox{$P_2$}="0-2-caption";
(2,-16)*+[o]=<2pt>\hbox{}*\frm{oo}="1-2";
(-2,-16)*+[o]=<2pt>\hbox{\textbullet}*\frm{oo}="2-2";
(-6,-16)*+[o]=<2pt>\hbox{\textbullet}*\frm{oo}="3-2";
(7,-24)*+[o]=<2pt>\hbox{\textbullet}*\frm{oo}="0-3";
(5,-24)*+[o]=<2pt>\hbox{\textbullet}*\frm{oo}="1-3";
(3,-24)*+[o]=<2pt>\hbox{\textbullet}*\frm{oo}="2-3";
(1,-24)*+[o]=<2pt>\hbox{\textbullet}*\frm{oo}="3-3";
"0-0";"0-1"**\dir{-};
"0-0";"1-1"**\dir{-};
"0-1";"0-2"**\dir{-};
"0-1";"1-2"**\dir{-};
"1-1";"2-2"**\dir{.};
"1-1";"3-2"**\dir{.};
"0-2";"0-3"**\dir{.};
"0-2";"1-3"**\dir{.};
"1-2";"2-3"**\dir{.};
"1-2";"3-3"**\dir{.};
%
"0-2";"1-2"**\dir{--};
"1-2";"1-1"**\dir{--};
\endxy }="G1";
\endxy
}
\caption{$P_1$ is the $<_{ac}$-least non-trivial valid antichain of $P_0^{-1}S$ and $P_2$ is the $<_{ac}$-least non-trivial valid antichain of $(P_0 * P_1)^{-1}S$.}
\end{figure}
%
%
%
We next pick $P_2 = \{ a,ba,b^2 \}$.  Once we factor out $P_2$, all that remains is $\{ a,b \}$.  The only antichains of $\{ a,b \}$ are $\{ \lambda \}$ and $\{ a,b \}$, both of which are valid antichains.  We pick the final factor to be $P_3 = \{ a,b \}$ and conclude that $P_0*P_1*P_2*P_3$ is a maximal factorization of $S$.
\end{Example}

\begin{Corollary}\label{uniqueFact}
Up to possible reordering of commutative terms, every finite set of incomparable strings has a unique maximal factorization.
\end{Corollary}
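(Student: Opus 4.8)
The plan is to prove existence and uniqueness separately, in both cases exploiting the linear order on $\vac$ from Theorem \ref{ac-linear} together with the factoring identity $P * (P^{-1}S) = S$ from Proposition \ref{p*ps}. Throughout I call a set $P$ \emph{prime} if $\vac(P) = \{\{\lambda\},P\}$, so a maximal factorization is exactly a factorization into prime factors; note a prime set is pairwise incomparable, since $P\in\vac(P)$.

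For existence I would run the greedy procedure suggested by the definition: set $R_0 = S$ and, as long as $R_i$ is not prime, let $P_i$ be the $<_{ac}$-least non-trivial member of $\vac(R_i)$, which exists and is unique because $(\vac(R_i), <_{ac})$ is a finite linear order; then put $R_{i+1} = P_i^{-1}R_i$. By Proposition \ref{p*ps}, $R_i = P_i * R_{i+1}$ at each stage, so $S = P_0 * \cdots * P_i * R_{i+1}$. Two points need checking. First, termination: each $P_i$ is non-trivial, so every $x\in P_i$ has $|x|\geq 1$ and hence $\max_{s\in R_{i+1}}|s| < \max_{s\in R_i}|s|$; the procedure therefore halts after at most $\max_{s\in S}|s|$ steps at a prime $R_n$, which becomes the final factor $P_n$. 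Second, each intermediate $P_i$ must itself be prime: if $P_i$ had a non-trivial valid antichain $Q\neq P_i$, then $Q <_{ac} P_i$ (as $P_i$ is the $<_{ac}$-greatest element of $\vac(P_i)$, every valid antichain of $P_i$ being a set of prefixes of members of $P_i$), and by Proposition \ref{prefix-of-prefix} $Q$ would be a non-trivial valid antichain of $R_i$ lying strictly $<_{ac}$-below $P_i$, contradicting minimality. Thus $P_0 * \cdots * P_n$ is a maximal factorization.

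For uniqueness it suffices to show that the first factor of any maximal factorization $S = P_0 * \cdots * P_n$ is forced to equal the $<_{ac}$-least non-trivial member of $\vac(S)$; the result then follows by induction on $\max_{s\in S}|s|$, since $P_0^{-1}S = P_1 * \cdots * P_n$ is again a maximal factorization of a set of strictly smaller maximal length. To pin down $P_0$ I would establish two facts. Fact (A): any prime $P$ is a valid antichain of $P * R$ whenever $R\neq\emptyset$, with $P^{-1}(P*R)=R$. This is a direct computation: since $P$ is an antichain, for $x\in P$ the only members of $P*R$ extending $x$ are the $xy$ with $y\in R$, whence $T[P*R]_x = T[R]$ independently of $x$. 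Applying (A) with $R = P_1 * \cdots * P_n$ gives $P_0\in\vac(S)$. Fact (B): if $Q\in\vac(S)$ and $Q \leq_{ac} P_0$, then $Q\in\vac(P_0)$. Granting (B), primality of $P_0$ forces any such $Q$ to be $\{\lambda\}$ or $P_0$, so no non-trivial valid antichain of $S$ lies strictly below $P_0$; with $P_0\in\vac(S)$ this identifies $P_0$ as the least non-trivial member of $\vac(S)$, unique by Theorem \ref{ac-linear}. Hence $P_0 = P_0'$ for any competing maximal factorization, and the induction closes.

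The main obstacle is Fact (B): transferring validity from $S$ down to $P_0$. The key sub-step is a ``no mixed directions'' observation. Writing $A = T[S]_x$ (common to all $x\in Q$ by validity) and $B = T[S]_y$ (common to all $y\in P_0$), if some comparable pair had $x \prec y$ and another had $y' \prec x'$, then $B$ would be a strict subtree of $A$ while $A$ is a strict subtree of $B$, impossible for finite trees. Since $Q \leq_{ac} P_0$, this forces every $x\in Q$ to be a prefix of each of its comparable elements of $P_0$, giving $Q\subseteq T[P_0]$ and the maximality of $Q$ as an antichain of $P_0$. Validity of $Q$ in $P_0$ is the delicate point: for $x\in Q$ the shifted set $x^{-1}\{y\in P_0 : x \preceq y\}$ equals $\{m\in A : A_m = B\}$, which depends only on the common trees $A$ and $B$ and is therefore the same for every $x\in Q$; because a finite tree cannot equal one of its strict subtrees, these witnessing nodes are pairwise incomparable, which forces that set to be the unique maximal antichain of $A$ with quotient $B$. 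This makes $T[P_0]_x$ independent of $x$ and completes (B). I expect (B), and in particular this identification of $x^{-1}\{y\in P_0 : x\preceq y\}$ as a tree-determined set, to require the most care.
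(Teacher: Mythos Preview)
Your argument is correct, but it takes a genuinely different route from the paper's. The paper's proof leans entirely on the second clause of the definition of a maximal factorization (``equivalently, $P_{i+1}$ is the $<_{ac}$-least non-trivial valid antichain of $P_i^{-1}\cdots P_0^{-1}S$''): since each such choice is unique by Theorem~\ref{ac-linear}, uniqueness of the whole factorization is immediate, and existence is just running the greedy process until it stops. You instead work from the first clause (each $P_i$ is prime) and do the honest work of showing that any prime factorization must agree with the greedy one. Your Fact~(A) shows the first prime factor $P_0$ lies in $\vac(S)$, and Fact~(B) shows that any $Q\in\vac(S)$ with $Q\leq_{ac}P_0$ is already a valid antichain of $P_0$ and hence, by primality, equals $\{\lambda\}$ or $P_0$; together these pin down $P_0$ as the $<_{ac}$-least non-trivial member of $\vac(S)$, after which induction on $\max_{s\in S}|s|$ finishes. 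In effect your Facts (A) and (B) supply exactly the justification for the word ``equivalently'' that the paper asserts in the definition but never proves, so your proof is strictly more complete. Your termination argument (the maximum string length strictly drops at each step) is also cleaner than the paper's appeal to finiteness of $\vac(S)$, which does not quite track the quantity that is actually shrinking.
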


\begin{proof}
Let $S$ be a finite set of incomparable strings.  We will apply the iterative process illustrated in Example \ref{factorizeEx} to $S$.  Define $P_0$ to be the $<_{ac}$-least non-trivial valid antichain of $S$.  If $P_{0} = S$, then the process is complete.  By Theorem \ref{ac-linear}, the choice of $P_0$ is unique.  Suppose we have defined $P_0, P_1, \ldots, P_n$.  Let $S_n = P_n^{-1} \cdots P_0^{-1} S$.  To be explicit, $S_n = P_n^{-1} (P_{n-1}^{-1} ( \cdots (P_0^{-1} S)))$.  Define $P_{n+1}$ to be the $<_{ac}$-least non-trivial valid antichain of $S_n$.  As before, the choice is unique.  If $P_{n+1} = S_n$, then the process is complete.  Otherwise, we proceed to the next iteration.

Since $\vac(S)$ is finite, the process must terminate.  The uniqueness of the factorization follows from the uniqueness of the choices made at each stage of the process.
%
\end{proof}

Observe that the interative process described above specifies a unique order for the terms of the unique maximal factorization.  When the terms are listed in the order specified by this process, we will say that the factorization is in \emph{canonical order}.

\section{Semi-Deterministic Bi-Languages}

In this section, we prove the existence of a canonical SDT for every SDBL.  Determining the canonical SDT for an SDBL is done in two phases.  First, a ``maximal'' function on prefixes of the input language is found.  Finding such a maximal function is analogous to the onwarding performed in algorithms such as OSTIA and can be loosely described as the process of moving decisions earlier in the translation process.  Second, subsets of the domain on which the function has identical outputs are conflated in a largely standard merging process.  Merging produces a finite-order equivalence relation on $T[L]$.  Using this equivalence relation, we can define the canonical SDT.

\subsection{Semi-Deterministic Functions}\label{onwarding-section}

\begin{Definition}
Let $f$ be an SDBL over $L$.  $F: T[L] \rightarrow \mathscr P^*(\Omega^*)$ is a \emph{semi-deterministic function (SDF) of $f$} if, for $x\in L$, $f(x) = F(x\upto 1)*F(x\upto 2)* \cdots *F(x)*F(x\#)$.  We define $\Pi F(x) = F(x\upto 1)*F(x\upto 2)*\cdots *F(x)$.  If $F$ and $F'$ are SDFs of $f$, we say that $F \leq_{sdf} F'$ if $\Pi F(x)$ is a valid antichain of $\Pi F'(x)$ for all $x$.  The SDF \emph{induced} by $f$ is the SDF, $F$, such that $F(x) = \{ \lambda \}$ for all $x\in T[L]$ and $F(x\#) = f(x)$ for all $x\in L$.
\end{Definition}

\begin{Example}\label{SDFexample}
Suppose that $A,B,C \subseteq \Omega^*$ are finite, non-empty and not equal to $\{\lambda\}$.  Let $\Sigma = \{a\}$ be the input alphabet.  Define an SDBL, $f$, over $L = \{ a^2 \}$ by $f(a^2) = A*B*C$.  We define two incomparable SDFs of $f$ as follows.  The first SDF: $F(\lambda) = \{ \lambda \}, F(a) = A*B, F(a^2) = \{ \lambda \}$ and $F(a^2\#) = C$.  The second SDF: $F'(\lambda) = \{ \lambda \}, F'(a) = A, F'(a^2) = B*C$ and $F'(a^2\#) = \{ \lambda \}$.  Since $\Pi F(a)$ is not a valid antichain of $\Pi F'(a)$, $F \not \leq_{sdf} F'$.  Likewise, since $\Pi F'(a^2)$ is not a valid antichain of $\Pi F(a^2)$, $F' \not \leq_{sdf} F$.
\end{Example}

Example \ref{SDFexample} demonstrates that $\leq_{sdf}$ is not a linear ordering of the SDFs of a fixed SDBL.  Nonetheless, there is a $\leq_{sdf}$-maximum SDF of $f$.

\begin{Theorem}\label{maxSDF}
If $f$ is an SDBL over $L$, then there is a $\leq_{sdf}$-maximum SDF of $f$.
\end{Theorem}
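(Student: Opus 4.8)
The plan is to build the maximum SDF by pushing output forward as aggressively as the data allow, recording at each prefix the largest valid antichain shared by all future translations, and then to verify that every SDF sits $<_{sdf}$-below the result. For $x\in T[L]$ set
\[
\mathrm{Com}(x)=\bigcap_{y\in L,\,x\preceq y}\vac(f(y)).
\]
The first thing I would establish is a closure fact: the concatenation $P*Q$ of two sets of pairwise incomparable strings is again pairwise incomparable, and if $S=P*R$ with $P$ an antichain and $R\neq\varnothing$, then $P\in\vac(S)$ and $P^{-1}S=R$ (each reduces to a short case analysis on prefixes, the validity clause using that $p^{-1}(P*R)=R$ for every $p\in P$, and Proposition~\ref{p*ps}). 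Applying this to an arbitrary SDF $F'$: since $x\preceq y\in L$ gives $f(y)=\Pi F'(x)*R$ with $R\neq\varnothing$, we get $\Pi F'(x)\in\vac(f(y))$, hence $\Pi F'(x)\in\mathrm{Com}(x)$. Thus $\mathrm{Com}(x)$ collects the value $\Pi F'(x)$ of \emph{every} SDF. By the Corollary to Theorem~\ref{ac-linear} it is linearly ordered by $<_{ac}$, and as a subset of the finite set $\vac(f(y_0))$ (for any fixed $y_0\succeq x$ in $L$) containing $\{\lambda\}$ it is finite and nonempty; let $\Pi(x)$ be its $<_{ac}$-greatest element, and put $\Pi(\lambda)=\{\lambda\}$ (the value at $\lambda$ is forced, since the empty product is $\{\lambda\}$ and $F(\lambda)$ enters no translation).

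The main obstacle is the following conversion lemma, which I would isolate and prove: if $P,Q\in\vac(S)$ and $P\le_{ac}Q$, then in fact $P\in\vac(Q)$. The containment $P\subseteq T[Q]$ and the maximal-antichain conditions follow from the trichotomy in the proof of Theorem~\ref{ac-linear} together with Proposition~\ref{comparable}: each $x\in P$ is a prefix of its unique $\parallel$-partner in $Q$, and every member of $Q$ is comparable to some member of $P$. The delicate point is validity, $T[Q]_u=T[Q]_v$ for $u,v\in P$. Here I would use that $Q\in\vac(S)$ forces a single common subtree $\rho$ below every element of $Q$; below each $u\in P$—where $P\in\vac(S)$ makes the subtrees $T[S]_u$ identical—the portion of $Q$ lying under $u$ is exactly the minimal cut of $T[S]_u$ by nodes whose subtree is $\rho$, and this cut is determined by the pair $(T[S]_u,\rho)$ alone, hence independent of $u$. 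This yields $T[Q]_u=T[Q]_v$ and completes the lemma.

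With the lemma in hand the construction closes up. Monotonicity is immediate: $x^-\preceq x$ gives $\mathrm{Com}(x^-)\subseteq\mathrm{Com}(x)$, so $\Pi(x^-)\le_{ac}\Pi(x)$, and since both lie in the linear order $\vac(f(y_0))$ the lemma gives $\Pi(x^-)\in\vac(\Pi(x))$. Define $F(x)=\Pi(x^-)^{-1}\Pi(x)$ for $x\neq\lambda$ and $F(x\#)=\Pi(x)^{-1}f(x)$ for $x\in L$. Each value is a finite nonempty set of pairwise incomparable strings (a quotient $p^{-1}\Pi(\cdot)$ of an antichain by a valid antichain), so $F\colon T[L]\to\mathscr P^*(\Omega^*)$. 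By Proposition~\ref{p*ps} we have $\Pi(x^-)*F(x)=\Pi(x)$, so a telescoping induction gives $\Pi F(x)=\Pi(x)$ for all $x$; then $f(x)=\Pi(x)*F(x\#)=\Pi F(x)*F(x\#)$, showing $F$ is an SDF of $f$.

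Finally I would check maximality. For any SDF $F'$ we showed $\Pi F'(x)\in\mathrm{Com}(x)$, whence $\Pi F'(x)\le_{ac}\Pi(x)=\Pi F(x)$; since both are valid antichains of $f(y_0)$, the lemma gives $\Pi F'(x)\in\vac(\Pi F(x))$ for every $x$, that is, $F'<_{sdf}F$. Hence $F$ is the $<_{sdf}$-maximum SDF of $f$. The only genuinely nontrivial work is the conversion lemma; everything else is bookkeeping on top of Theorem~\ref{ac-linear}, its Corollary, and Propositions~\ref{comparable} and~\ref{p*ps}.
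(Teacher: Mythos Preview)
Your proof is correct and rests on the same core idea as the paper's---push output forward as far as the intersection $\bigcap_{y\succeq x}\vac(f(y))$ permits---but the route differs in two respects worth noting. First, the paper reaches its target $\Pi F_m(x)$ through the maximal-factorization machinery of Corollary~\ref{uniqueFact}, defining $P^x$ as the product of the longest common initial segment of the factorizations of all $f(y)$ with $y\succeq x$; you bypass factorizations entirely and take $\Pi(x)$ directly as the $<_{ac}$-maximum of $\mathrm{Com}(x)$, which is cleaner and makes the appeal to Theorem~\ref{ac-linear} and its Corollary more transparent (the two constructions yield the same object, since the valid antichains of $f(y)$ are exactly the initial products of its maximal factorization). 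Second, and more substantively, you isolate and prove the conversion lemma ``$P,Q\in\vac(S)$ and $P\le_{ac}Q$ imply $P\in\vac(Q)$'': this is exactly what is needed to pass from a $\le_{ac}$ comparison to the definition of $<_{sdf}$, and the paper's final sentence (``$F(x)\le_{ac}F_m(x)$, proving that $F_m$ is a $<_{sdf}$-maximum'') uses it tacitly without justification. Your sketch of the validity clause is slightly loose---the word ``minimal'' in ``minimal cut by nodes whose subtree is $\rho$'' is unnecessary, because in a finite tree any two nodes with identical residual subtree are automatically incomparable---but the conclusion that $Q_u$ depends only on $(T[S]_u,\rho)$ is sound. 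Your argument is thus more self-contained; the paper's factorization detour buys a constructive flavor that resurfaces in the learning algorithm but is not needed for the existence statement itself.
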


\begin{proof}
For $x \in T[L]$, let $S$ be the collection of all members of $L$ that extend $x$ and let $x_0$ be the $<_{llex}$-least member of $S$.  By Corollary \ref{uniqueFact}, for every $y \in S$ there is a unique maximal factorization of $f(y)$.  Let $P_0*\cdots * P_n$ denote the unique maximal factorization of $f(x_0)$.  Let $P_0 * \cdots * P_i$ be the longest common initial segment of all factorizations of members of $\{f(x) : x\in S\}$ when the terms of the factorizations are listed in canonical order.  We define $P^x$ to be the product of this longest common factorization.

We define $F_m(\lambda) = \{ \lambda \}$ and define $F_m$ inductively on the members of $T[L]$ in $<_{llex}$-order as follows.  Suppose we are considering $x \in T[L]$ and $F_m$ has already been defined on all $<_{llex}$-lesser members of $T[L]$.  We define $F_m(x) = (\Pi F_m(x^-))^{-1}P^x$.  If $y\in L$ and $F_m(y)$ is defined, we set $F_m(y\#) = (\Pi F_m(y))^{-1}f(y)$.

If $x \prec y$, then $\Pi F_m(x)$ is a valid antichain of $F_m(y)$ and $(F_m(y))^{-1}f(y)$ is well-defined.  Consequently, $F_m$ is a well defined function with domain $T[L]$.  If $F$ is any SDF of $f$ and $x$ is an arbitrary member of $T[L]$, then $\Pi F(x), \Pi F_m(x) \in \vac(f(x_0))$, where $x_0$ is the $<_{llex}$-least extension of $x$ in $L$.  By Theorem \ref{ac-linear}, for any $x \in T[L]$, $\Pi F(x)$ and $\Pi F_m(x)$ are $<_{ac}$-comparable.  Furthermore, $\Pi F(x), \Pi F_m(x) \in \vac(f(y))$ for all $y\succ x$.  Given the construction of $F_m$, if $F_m(x) <_{ac} F(x)$, then there must be a $y \in L$ such that $x \prec y$ and $\Pi F(x) \not\in \vac(f(y))$ -- which is not possible.  Thus, $F_m$ is a $\leq_{sdf}$-maximum SDF of $f$.
\end{proof}


\begin{Definition}
Let $f$ be an SDBL with maximal SDF $F$.  For $x \in \mbox{dom}(F)$ and $F'$ an SDF of $f$, we say that $F'$ is \emph{onward at $x$} if for all $y \in \mbox{dom}(F)$, $y \succeq x$ implies that $F'(y) = F(y)$.  If $F'$ is onward at $\lambda$, then we say that $F'$ is \emph{onward}.
\end{Definition}

In Section \ref{transQueries}, we use the concept of onwarding to build the maximal SDF from data.

\subsection{Merging}

The second phase of building a canonical form for SDTs is to define an equivalence relation on the domain of a maximum SDF.  This means identifying which paths lead to the same state.  

\begin{Definition}\label{futureOne}
Let $F$ be an SDF of $f$ over $L$ and $x\in T[L]$.  We define $\future_F[x] : x^{-1}T[L] \rightarrow R$, where $R$ is the range of $F$, such that $\future_F[x](y) = F(xy)$.  If $x,y \in \mbox{dom}(F)$, we say that $x \equiv y$ if $\future_F[x] = \future_F[y]$.  Given $x$, we define $\lllesdf{x}$ to be the $<_{llex}$-least element of $\mbox{dom}(F)$ that is equivalent to $x$.
\end{Definition}

\begin{Proposition}\label{mergeProp}$ $
\begin{enumerate}
\item \label{mergePropA} $\equiv$ is an equivalence relation on the domain of an SDF.

\item \label{mergePropB} If $x \equiv y$ and $xz,yz \in T[L]$, then $xz \equiv yz$.

\item \label{finiteEquivClasses} If $F$ is an SDF of $f$ over $L$, then there are only a finite number of $\equiv$-equivalence classes on the domain of $F$.
\end{enumerate}
\end{Proposition}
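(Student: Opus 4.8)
The plan is to handle the three parts in order, with the first two falling straight out of the definitions and the third carrying essentially all the weight.

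For part (\ref{mergePropA}), observe that $x\equiv y$ holds by definition exactly when $\future_F[x]=\future_F[y]$; that is, $\equiv$ is the kernel of the map $x\mapsto\future_F[x]$. Equality of objects is reflexive, symmetric and transitive, and $\equiv$ inherits these three properties verbatim, so I would simply record this. For part (\ref{mergePropB}), I read $\future_F[x]$ as the behaviour of $F$ on the proper extensions of $x$, re-indexed by the suffix, so that $\future_F[x]=\future_F[y]$ says precisely that for every $u\succ\lambda$ one has $xu\in T[L]\Leftrightarrow yu\in T[L]$ and, when both hold, $F(xu)=F(yu)$. Given $x\equiv y$ with $xz,yz\in T[L]$, I would fix an arbitrary $u\succ\lambda$ and apply the hypothesis to the suffix $zu$ (which is again $\succ\lambda$), obtaining $xzu\in T[L]\Leftrightarrow yzu\in T[L]$ together with $F(xzu)=F(yzu)$. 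As $u$ ranges over all proper suffixes this gives $\future_F[xz]=\future_F[yz]$, i.e.\ $xz\equiv yz$. The only care needed here is the bookkeeping of the re-indexing by $z$.

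The substance is in part (\ref{finiteEquivClasses}). Since $f$ is an SDBL, I would fix a generating SDT $G$ and put $n=|\states[G]|$, and aim to bound the number of $\equiv$-classes by $n$ by showing that $q_x=q_y$ implies $x\equiv y$. The driving observation is that the residual translation $z\mapsto f(xz)$ depends, up to the left factor $G$ emits while reading $x$, only on the state $q_x$; hence there are only finitely many residuals. I would then feed this through the onwarding construction of Theorem \ref{maxSDF}: using $\Pi F(x)=P^{x}$ and the uniqueness of maximal factorizations (Corollary \ref{uniqueFact}), each increment $F(xz)=\big(\Pi F((xz)^-)\big)^{-1}\Pi F(xz)$ is read off from the unique maximal factorization of the residual after the committed prefix is cancelled. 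The left factor contributed by $x$ cancels in this expression, leaving $F(xz)$ a function of $q_x$ and $z$ alone; by the reading of $\future_F$ above, that is exactly $\future_F[x]=\future_F[y]$ whenever $q_x=q_y$, so $\equiv$ has at most $n$ classes.

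The main obstacle is precisely this cancellation. I must show that the output already committed along $x$, namely $\Pi F(x)$, contributes nothing to $\future_F[x]$ beyond what $q_x$ already determines, so that the incremental factor $F(xz)$ is genuinely a function of $q_x$ and $z$. This is where the canonical (maximum, fully onwarded) structure of $F$ is essential: the maximality from Theorem \ref{maxSDF} forces $\Pi F(x)$ to absorb exactly the common left factor $G$ produces while reading $x$ (together with the state-determined common part), and the uniqueness in Corollary \ref{uniqueFact} guarantees that this prefix aligns with, and hence cancels against, the state-determined factorization of the residual. I would isolate this absorption as a lemma — that $\Pi F(x)$ equals the emitted-prefix-times-state-common-factor up to factorization boundaries — and the rest of part (\ref{finiteEquivClasses}) then follows formally from Proposition \ref{p*ps} and Theorem \ref{ac-linear}.
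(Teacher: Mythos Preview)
Your handling of parts (\ref{mergePropA}) and (\ref{mergePropB}) matches the paper's.

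For part (\ref{finiteEquivClasses}) the paper is far more direct than you are: it fixes any SDT $G$ generating $f$ and simply asserts that whenever $p_x$ and $p_y$ end at the same state of $G$ one has $x\equiv y$, on the grounds that what happens thereafter depends only on the state reached; hence the number of $\equiv$-classes is at most $|\states[G]|$. There is no appeal to Theorem~\ref{maxSDF}, to onwarding, or to unique factorizations. Your route through the maximum SDF is therefore much heavier machinery than the paper deploys, and it also carries a restriction you yourself flag: you invoke $\Pi F(x)=P^{x}$, the defining formula of $F_m$ from the proof of Theorem~\ref{maxSDF}, and say the maximal structure of $F$ is ``essential''. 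The proposition, however, is stated for an \emph{arbitrary} SDF $F$, so as written your argument does not establish the claim in its stated generality.

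That said, your instinct points at a real issue the paper glosses over. The paper's one-line step tacitly assumes $F$ is tied to $G$ transition by transition; for a genuinely arbitrary SDF the implication $q_x=q_y\Rightarrow x\equiv y$ can fail. For instance, with $L=\{a^n:n\ge 0\}$ and $f(a^n)=\{A^n\}$, setting $F(a^k)=\{A\}$ when $k$ is a power of $2$ and $F(a^k)=\{\lambda\}$ otherwise (with $F(a^n\#)$ chosen so the products recover $f$) gives a legitimate SDF of $f$ with infinitely many $\equiv$-classes, even though the obvious generating SDT has a single state. In practice the paper only applies part~(\ref{finiteEquivClasses}) to $F_m$, where both your argument and the paper's go through; your proposed cancellation lemma would make that case rigorous, though even there it is more than needed compared with observing directly that for $F_m$ the increment $F_m(xz)$ is determined by $q_x$ and $z$.
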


\begin{proof}
Part \ref{mergePropA} follows from the fact that equality is an equivalence relation.  Part \ref{mergePropB} follows from the definition of $\equiv$.  To prove part \ref{finiteEquivClasses}, let $G$ be an SDT that generates $f$ and let $q_x$ be a state of $G$ which can be reached by the input string $x\in T[L]$.  For any $y \in T[L]$, if $p_y$ leads to $q_x$, then $x \equiv y$ as their futures are the same.  Thus, $\equiv$ induces an equivalence relation on (hence, a partition of) the states of $G$.  Since there is at least one state in each equivalence class, the fact that $|\states[G]| < \infty$ implies that there are only finitely many equivalence classes.
\end{proof}

%
%
%


\begin{Lemma}\label{boundedFuture}
Let $F$ be an SDF of $f$ over $L$.  There is an $n$ such that for all $x,y \in T[L]$, $x \equiv y$ if and only if $\future[x]\upto x\Sigma^n = \future[y]\upto y\Sigma^n$.
\end{Lemma}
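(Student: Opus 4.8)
The plan is to recognize this as the Myhill--Nerode state-minimization distinguishability bound adapted to $F$: two prefixes are $\equiv$-equivalent exactly when their residual futures coincide, and one only needs to probe those futures to a depth bounded by the number of equivalence classes.

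First I would fix the SDF $F$ and write $N$ for the number of $\equiv$-equivalence classes on $\mbox{dom}(F)$, which is finite by Proposition \ref{mergeProp}(\ref{finiteEquivClasses}). For each $m$ define a relation $\equiv_m$ by $x \equiv_m y$ iff $\future[x]\upto x\Sigma^m = \future[y]\upto y\Sigma^m$, i.e.\ the futures agree on all extensions of length at most $m$. The forward direction of the lemma is then immediate and holds for every $m$: if $x \equiv y$ then $\future[x] = \future[y]$, so restricting both sides to depth $m$ preserves equality and gives $x \equiv_m y$. The content lies entirely in the reverse direction, which amounts to exhibiting a single $m$ with $\equiv_m \subseteq \equiv$.

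The key step is the chain and its collapse. The relations form a refining chain $\equiv_0 \supseteq \equiv_1 \supseteq \cdots$ with $\bigcap_m \equiv_m = \equiv$, since agreement of futures everywhere is precisely the conjunction of agreement at every finite depth. Because $\equiv \subseteq \equiv_m$, every $\equiv_m$-class is a union of $\equiv$-classes, so each $\equiv_m$ has at most $N$ classes. The engine is a one-step recursion: $x \equiv_{m+1} y$ holds iff for every $a \in \Sigma$ the children agree at the top — $xa \in T[L] \Leftrightarrow ya \in T[L]$ and $F(xa)=F(ya)$ — and $xa \equiv_m ya$. From this I derive the standard stabilization: if $\equiv_m = \equiv_{m+1}$ for some $m$, then substituting $xa \equiv_m ya \Rightarrow xa \equiv_{m+1} ya$ back into the recursion yields $\equiv_{m+1} = \equiv_{m+2}$, and by induction $\equiv_m = \equiv_{m'}$ for all $m' \ge m$; intersecting over $m'$ gives $\equiv_m = \equiv$.

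Finally, termination and the bound. The class-count $m \mapsto \#(\equiv_m)$ is non-decreasing and bounded above by $N$, so by pigeonhole two consecutive values coincide at some $m^* < N$; at that point the collapse above gives $\equiv_{m^*} = \equiv$, and hence $\equiv_N = \equiv$ as well. Taking $n = N$ yields $x \equiv_n y \Leftrightarrow x \equiv y$ for all $x,y \in T[L]$, as required. I expect the main obstacle to be proving the one-step recursion cleanly: I must handle the index shift between $\future[x]$ and $\future[xa]$ (the former records $F(xa)$ at depth one, while in the latter that value sits at depth zero and is excluded from the strict future), and I must carry the tree/domain information — which extensions actually lie in $T[L]$ — as part of ``agreement'', not merely the $F$-values. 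Once that bookkeeping is pinned down, the refinement-stabilization argument is routine.
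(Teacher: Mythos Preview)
Your argument is correct and rather more thorough than the paper's. The paper's proof is essentially a two-line existence argument: by Proposition~\ref{mergeProp}(\ref{finiteEquivClasses}) there are only finitely many distinct futures, so for each pair of distinct futures there is some finite-depth witness distinguishing them; take $n$ to be the maximum depth over the finitely many such witnesses. No explicit bound is extracted, and no recursion on depth is set up.

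Your route is the classical Moore/Myhill--Nerode partition-refinement argument: you introduce the depth-$m$ approximants $\equiv_m$, establish the one-step recursion, and use stabilization plus the pigeonhole bound on class counts to obtain the explicit value $n=N$. What this buys you is a concrete bound tied to the number of equivalence classes (hence to $|\states[G_f]|$), and a proof that would translate directly into an algorithm; what the paper's approach buys is brevity, since the lemma is used only qualitatively downstream. The bookkeeping issue you flag---carrying membership in $T[L]$ alongside the $F$-values, and the index shift between $\future[x]$ and $\future[xa]$---is real but routine, and you have identified it correctly.
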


\begin{proof} 
The proof follows immediately from Proposition \ref{mergeProp}, part \ref{finiteEquivClasses}.  Since there are only a finite number of possible futures, there is a finite portion of each that uniquely identifies it.  Let $n$ be the maximum depth of the paths required to obtain the identifying portion of each future.  We have obtained the desired $n$.
\end{proof}

We can think of the identifying bounded future of an equivalence class as a sort of signature, an analogue of the famous locking sequence for Gold style learning \cite{blum-blum}.

The maximum SDF and the equivalence relation on its domain depend only on the underlying SDBL.  Thus, we have defined a machine-independent canonical form.  As a footnote, we demonstrate here how to produce an SDT from the canonical form which is unique up to isomorphism.  

\begin{Definition}\label{canonical-form-def}
Let $f$ be an SDBL, let $F_m$ be the maximum SDF for $f$ and let $\equiv$ be the equivalence relation on the domain of $F_m$.  Define a finite state machine, $G_f$, as follows:
\begin{itemize}
\item $\states[G_f] = \{ r_{\lllesdf{x}} : x\in T[L] \}$ (in other words, a set of blank states indexed by $\{ \lllesdf{x} : x \in T[L] \}$).

\item The initial state is $r_{\lambda}$.

\item $E_{G_f} = \{ \langle r_{\lllesdf{x^-}},r_{\lllesdf{x}},x(|x|-1),F_m(x) \rangle : x\in T[L] \}\cup \{ \langle r_{\lllesdf{x}},r_{\lambda},\#,F_m(x\#) \rangle : x\in L \}$
\end{itemize}
We call $G_f$ the \emph{canonical SDT for $f$}.
\end{Definition}

As noted prior to the definition, the maximum SDF depends only on the SDBL.  Thus, we are justified in calling the above SDT a canonical SDT.  Although $L$ and $T[L]$ may be infinite sets, the set of transitions, $E_{G_f}$, and the set of states, $\states[G_f]$, are finite by Proposition \ref{mergeProp}.  Also, observe that the method of defining an SDT from an SDF described in Definition \ref{canonical-form-def} can be used to define a unique SDT from any SDF.  Since every SDT also defines a unique SDF, there is a bijection between SDFs and SDTs for a given SDBL.

\begin{Theorem}
Let $f$ be an SDBL.  $G_f$ is an SDT that generates $f$.
\end{Theorem}

\begin{proof}
Clearly, $G_f$ is a finite state transducer.  If $P_0, \cdots , P_n$ are sets of incomparable strings, then $S = P_0 * \cdots * P_n$ also consists of incomparable strings.  To see this, suppose $x = x_0 \cdots x_n$ and $y = y_0 \cdots y_n$ are such that $x \prec y$ and $x_i,y_i \in P_i$ for all $i \leq n$.  If $i$ be least such that $x_i \neq y_i$, then $x_i \prec y_i$ and $P_i$ contains two comparable strings.  Thus, the outputs of all transitions of $G_f$ consist of incomparable strings, as they are factors of the elements of the range of $f$.

We must show that $G_f$ generates $f$.  $G_f$ and $f$ have the same domain.  Let $F_m$ be the maximal SDF of $f$.  If $x \in T[L]$, then $G_f[p_x] = \Pi F_m(x)$, thus, $G_f$ generates $f$.
\end{proof}

\subsection{An Example}

To illustrate the canonical form that we have now defined, we exhibit a transducer not in canonical form together with its canonical form.
\begin{figure}[H]
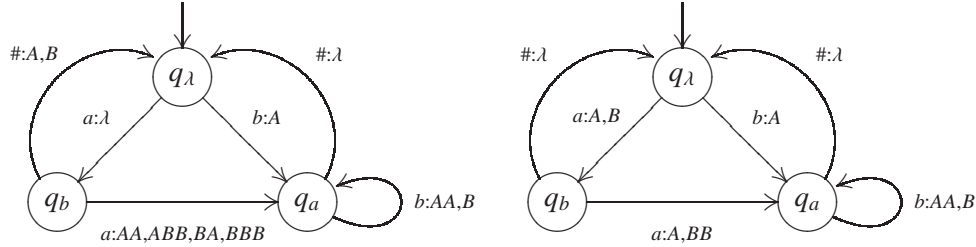

\centering
\resizebox{14cm}{!}{
\xy
(0,0)*++{\xy (0,10)*+{}="init";
(0,0)*+[o]=<20pt>\hbox{$q_{\lambda}$}*\frm{o}="in-st";
(15,-15)*+[o]=<20pt>\hbox{$q_{a}$}*\frm{o}="ri-st";
(-15,-15)*+[o]=<20pt>\hbox{$q_b$}*\frm{o}="le-st";
(30,-25)*+{}="lp-1-1";
(30,-5)*+{}="lp-1-2";
(-25,-5)*+{}="lp-2-1";
(-10,10)*+{}="lp-2-2";
(25,-5)*+{}="lp-3-1";
(10,10)*+{}="lp-3-2";
"init";"in-st"**\dir{-} ?>*\dir2{>};
"in-st";"le-st"**\dir{-} ?>*\dir2{>} ?*_!/-11pt/{_{a:\lambda}};
"in-st";"ri-st"**\dir{-} ?>*\dir2{>} ?*_!/11pt/{_{b:A}};
"le-st";"ri-st"**\dir{-} ?>*\dir2{>} ?*_!/-11pt/{_{a:AA,ABB,BA,BBB}};
"le-st";"in-st"**\crv{"lp-2-1"&"lp-2-2"} ?>*\dir2{>} ?*_!/11pt/{_{\#:A,B}};
"ri-st";"in-st"**\crv{"lp-3-1"&"lp-3-2"} ?>*\dir2{>} ?*_!/-11pt/{_{\#:\lambda}};
"ri-st";"ri-st"**\crv{"lp-1-1"&"lp-1-2"} ?>*\dir2{>} ?*_!/-15pt/{_{b:AA,B}};
\endxy }="G0";
(60,0)*++{\xy (0,10)*+{}="init";
(0,0)*+[o]=<20pt>\hbox{$q_{\lambda}$}*\frm{o}="in-st";
(15,-15)*+[o]=<20pt>\hbox{$q_a$}*\frm{o}="ri-st";
(-15,-15)*+[o]=<20pt>\hbox{$q_b$}*\frm{o}="le-st";
(30,-25)*+{}="lp-1-1";
(30,-5)*+{}="lp-1-2";
(-25,-5)*+{}="lp-2-1";
(-10,10)*+{}="lp-2-2";
(25,-5)*+{}="lp-3-1";
(10,10)*+{}="lp-3-2";
"init";"in-st"**\dir{-} ?>*\dir2{>};
"in-st";"le-st"**\dir{-} ?>*\dir2{>} ?*_!/-11pt/{_{a:A,B}};
"in-st";"ri-st"**\dir{-} ?>*\dir2{>} ?*_!/11pt/{_{b:A}};
"le-st";"ri-st"**\dir{-} ?>*\dir2{>} ?*_!/-11pt/{_{a:A,BB}};
"le-st";"in-st"**\crv{"lp-2-1"&"lp-2-2"} ?>*\dir2{>} ?*_!/11pt/{_{\#:\lambda}};
"ri-st";"in-st"**\crv{"lp-3-1"&"lp-3-2"} ?>*\dir2{>} ?*_!/-11pt/{_{\#:\lambda}};
"ri-st";"ri-st"**\crv{"lp-1-1"&"lp-1-2"} ?>*\dir2{>} ?*_!/-15pt/{_{b:AA,B}};
\endxy }="G1";
\endxy
}
\caption{An SDBL not in canonical form (left) and in canonical form (right).}
\end{figure}

\section{The learning models}

There are two principal learning models in grammatical inference: identification in the limit \cite{gold67} and PAC-learning \cite{vali84}.  Each of these models admits variants depending on what additional sources of information are provided. In order to learn semi-deterministic transducers, we use queries \cite{angl87a} as an additional resource.  These queries are very limited; the oracle will be interrogated about a possible translation pair and the oracle will return either a \textsc{true} or \textsc{false}.

\begin{Definition}
Let $f$ be a bi-language.  The translation query $[x,Y]_f$ returns $\textsc{true}$ if $Y\in f(x)$ and $\textsc{false}$ otherwise.  We call this oracle $[f]$.  Where it is clear from context, we will write $[x,Y]$ instead of $[x,Y]_f$.
\end{Definition}
Equivalently, the oracle answers membership queries about the graph of the bi-language.  We also prove that learning is not possible without queries.  The precise definition of learning we use is adapted from the one used in \cite{higu97}:

\begin{Definition}
An algorithm, $A$, \emph{polynomial identifies in the limit with translation queries} a class of transducers, $\mathscr C$, if for any $G \in \mathscr C$ there is a set, $CS_G$, such that on any $\mathcal D \supseteq CS_G$ contained in the bi-language induced by $G$, $A$ outputs a $G'$ equivalent to $G$.  The algorithm must converge within a polynomial amount of time in $|\mathcal D|$ and $|G|$; $|CS_G|$ must be polynomial in $|G|$.  $|G|$, $|\mathcal D|$ and $|CS_G|$ denote the number of bits required to encode the objects $G$, $\mathcal D$ and $CS_G$, respectivly.
\end{Definition}

Note that in the above definition the number of calls to the oracle is also bounded by the overall complexity of the algorithm and is therefore polynomial in the size of the sample.

For Theorem \ref{not-learnable-thm}, we use a different model of learning: identification in the limit from positive data.  We give the definition below.

\begin{Definition}
An algorithm, $A$, \emph{identifies in the limit from positive data} a class of transducers, $\mathscr C$, if for any $G \in \mathscr C$ and any infinite enumeration of the bi-language induced by $G$, the algorithm $A$ outputs a finite number of distinct transducers on the initial segments of the enumeration.  The only transducer that is output infinitely many times must be equivalent to $G$.
\end{Definition}

\section{SDBLs are not learnable}

We assume domain knowledge (i.e., access to the characteristic function of the input language).  In the proof of the following theorem, we encode a standard example of a ``topological" failure of identification in the limit.  In particular, we encode the family $\mathcal H = \{\mathbb N\} \cup \{A \subseteq \mathbb N : |A|<\infty\}$ into a sequence of SDTs.

\begin{Definition}
Let $f$ be a bi-language.  We define $DK_f$ to be the oracle that, when asked about $x$, returns a boolean value $DK_f(x)$.  If $DK_f(x) = \textsc{true}$, then $x$ is in the input language of $f$ (in other words, the domain of $f$).  Otherwise, $x$ is not in the input language of $f$.  An algorithm which has access to $DK_f$ is said to have domain knowledge about $f$.
\end{Definition}



\begin{Theorem}\label{not-learnable-thm}
There is a collection of SDBLs, $\mathcal C$, such that no algorithm can identify $\mathcal C$ in the limit from positive data, even given domain knowledge of each member of $\mathcal C$.
\end{Theorem}

\begin{proof}
To avoid degenerate cases, we assume the output alphabet has at least two characters, $A$ and $B$, and the input alphabet has at least one character, $a$.  We exhibit a sequence of SDTs, $\{G_i\}_{i\in\mathbb N}$, such that no program can successfully learn every member of the sequence.  In the following graphical representation of $\{G_i\}_{i\in\mathbb N}$ we omit the \#-transitions, instead indicating terminal nodes with a double border.
\begin{figure}[H]
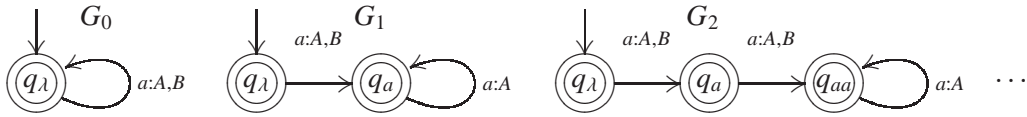

\centering
\resizebox{14cm}{!}{
\xy
(0,8)*+{G_0};
(0,0)*++{\xy (0,10)*+{}="init";
(0,0)*+[o]=<20pt>\hbox{$q_\lambda$}*\frm{oo}="in-st";
(15,-10)*+{}="lp-1-1";
(15,10)*+{}="lp-1-2";
"init";"in-st"**\dir{-} ?>*\dir2{>};
"in-st";"in-st"**\crv{"lp-1-1"&"lp-1-2"} ?>*\dir2{>} ?*_!/-11pt/{_{a:A,B}}; 
\endxy }="G0";
(33,8)*+{G_1};
(33,0)*++{\xy (0,10)*+{}="init";
(0,0)*+[o]=<20pt>\hbox{$q_\lambda$}*\frm{oo}="in-st";
(15,0)*+[o]=<20pt>\hbox{$q_a$}*\frm{oo}="a-st";
(30,-10)*+{}="lp-1-1";
(30,10)*+{}="lp-1-2";
"init";"in-st"**\dir{-} ?>*\dir2{>};
"in-st";"a-st"**\dir{-} ?>*\dir2{>} ?*_!/15pt/{_{a:A,B}};
"a-st";"a-st"**\crv{"lp-1-1"&"lp-1-2"} ?>*\dir2{>} ?*_!/-8pt/{_{a:A}}; 
\endxy }="G1";
(73,8)*+{G_2};
(110,0)*+{\cdots};
(80,0)*++{\xy (0,10)*+{}="init";
(0,0)*+[o]=<20pt>\hbox{$q_\lambda$}*\frm{oo}="in-st";
(15,0)*+[o]=<20pt>\hbox{$q_a$}*\frm{oo}="a-st";
(30,0)*+[o]=<20pt>\hbox{$q_{aa}$}*\frm{oo}="aa-st";
(45,-10)*+{}="lp-1-1";
(45,10)*+{}="lp-1-2";
"init";"in-st"**\dir{-} ?>*\dir2{>};
"in-st";"a-st"**\dir{-} ?>*\dir2{>} ?*_!/15pt/{_{a:A,B}};
"a-st";"aa-st"**\dir{-} ?>*\dir2{>} ?*_!/15pt/{_{a:A,B}};
"aa-st";"aa-st"**\crv{"lp-1-1"&"lp-1-2"} ?>*\dir2{>} ?*_!/-8pt/{_{a:A}}; 
\endxy }="G2";
\endxy
}
\caption{A sequence SDTs that cannot be identified in the limit from positive data.  Transitions are labelled with the input string they read and the set of possible output strings; for example, a transition $e$ labelled with $a:A,B$ has the property that $\trinput(e) = a$ and $\troutput(e) = \{A,B\}$.}
\end{figure}

Let $f_i$ be the SDBL generated by the SDT $G_i$.  Fix any learning algorithm and let $M$ be the function such that, given data $\mathcal D$, the hypothesis made by the learning algorithm is $M(\mathcal D)$.  We inductively define an enumeration of a bi-language generated by some member of the sequence, $\{G_i\}_{i\in\mathbb N}$.  Define $X_i = \langle a^i,A^i \rangle \langle a^i,B^i \rangle$ and $X_i^j = \langle a^j,A^j \rangle \langle a^{j+1},A^{j+1} \rangle \cdots \langle a^{j+i},A^{j+i} \rangle$.  Let $n_1$ be least such that $M(X_1 X_{n_1}^1)$ codes $G_1$.  If no such $n_1$ exists, then there is an enumeration of $f_1$ which the chosen algorithm fails to identify.  Thus, without loss of generality, we may assume such an $n_1$ exists.  Similarly, we pick $n_2$ to be least such that $M(X_1 X_{n_1}^1 X_2 X_{n_2}^2)$ codes $G_2$.  Proceeding in this fashion, either we reach a stage where some $n_k$ cannot be found and the algorithm has failed to learn $f_k$ or we have built an enumeration of $G_0$ on which the algorithm changes its hypothesis an infinite number of times.  In either case, learning has failed.  $\mathcal C = \{ f_i : i\in \mathbb N \}$ is the desired collection of SDBLs.
\end{proof}

\section{Learning with translation queries}\label{transQueries}

In the remainder of the paper, we exhibit an algorithm that can learn any SDBL, $f$, in the limit, provided the algorithm has access to the oracles $DK_f$ and $[f]$.  We present the algorithms that witness the learnability of SDBLs and summarize the result in Theorem \ref{learnableThm}.

%

\subsection{The characteristic sample}\label{sec-char-sample}

The characteristic sample must contain sufficient data to unambiguously perform two operations: onwarding and merging.  Throughout this section $f$ is an SDBL over $L$ and $G$ is the canonical SDT that generates $f$.  We define $\hat x$ to be the $<_{llex}$-least member of $L$ that extends $x$.  We now proceed to define the characteristic sample for $f$, denoted $CS_f$.  We will make extensive use of $p_x$, $q_x$ and $G[p_x]$ in this section (see Definition \ref{paths-def}).


The first component of the characteristic sample provides the data required to recognize which maximal antichains of a set of translations are not valid.  In order to illustrate the concept, consider $f(a\#)$, the translations along a path involving only one non-\# transition.  Let $X$ be the $<_{llex}$-least member of $f(a\#)$.  Every maximal antichain of $f(a\#)$ contains a prefix of $X$ and every prefix of $X$ is a member of at most one element of $\vac(f(a\#))$.  If $X_0$ is a prefix of $X$ that is not in a valid antichain, then there is a $Z \in f(a\#)$ such that for any $Z_0 \prec Z$, either
\begin{enumerate}
\item there is a $Z_1$ such that $Z_0 Z_1 \in f(a\#)$ and $X_0 Z_1 \not\in f(a\#)$, or

\item there is a $X_1$ such that $X_0 X_1 \in f(a\#)$ and $Z_0 X_1 \not\in f(a\#)$.
\end{enumerate}
In other words, $X_0$ and $Z_0$ have different futures.  Thus, for each prefix which is not an element of a valid antichain, there is a translation pair that witnesses this fact.  The following figure illustrates the two cases with the possible witnessing strings marked by dashed lines.
\begin{figure}[H]
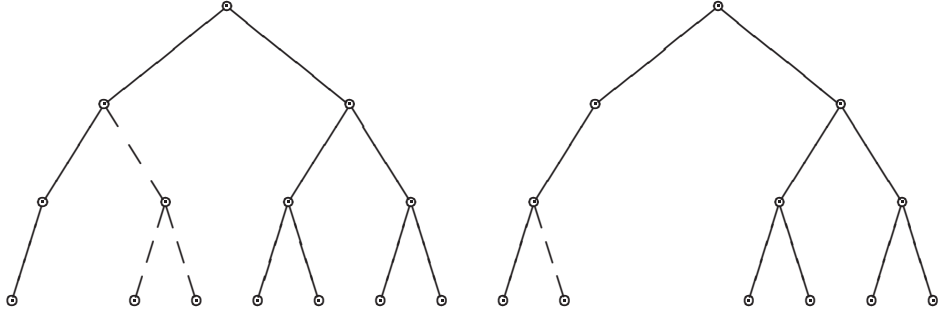

\centering
\vspace{-3em}
\resizebox{12.5cm}{!}{
\xy
(0,0)*+[o]=<2pt>\hbox{}*\frm{}="0-0";
(20,-8)*+[o]=<2pt>\hbox{}*\frm{oo}="0-1";
(-20,-8)*+[o]=<2pt>\hbox{}*\frm{oo}="1-1";
(30,-16)*+[o]=<2pt>\hbox{}*\frm{oo}="0-2";
(10,-16)*+[o]=<2pt>\hbox{}*\frm{oo}="1-2";
(-10,-16)*+[o]=<2pt>\hbox{}*\frm{oo}="2-2";
(-30,-16)*+[o]=<2pt>\hbox{}*\frm{oo}="3-2";
(35,-24)*+[o]=<2pt>\hbox{}*\frm{oo}="0-3";
(25,-24)*+[o]=<2pt>\hbox{}*\frm{oo}="1-3";
(5,-24)*+[o]=<2pt>\hbox{}*\frm{oo}="3-3";
(-5,-24)*+[o]=<2pt>\hbox{}*\frm{oo}="4-3";
(-15,-24)*+[o]=<2pt>\hbox{}*\frm{oo}="5-3";
(-25,-24)*+[o]=<2pt>\hbox{}*\frm{oo}="6-3";
(-35,-24)*+[o]=<2pt>\hbox{}*\frm{oo}="7-3";
(37.5,-32)*+[o]=<2pt>\hbox{}*\frm{oo}="0-4";
(32.5,-32)*+[o]=<2pt>\hbox{}*\frm{oo}="1-4";
(27.5,-32)*+[o]=<2pt>\hbox{}*\frm{oo}="2-4";
(22.5,-32)*+[o]=<2pt>\hbox{}*\frm{oo}="3-4";
(7.5,-32)*+[o]=<2pt>\hbox{}*\frm{oo}="6-4";
(2.5,-32)*+[o]=<2pt>\hbox{}*\frm{oo}="7-4";
(-2.5,-32)*+[o]=<2pt>\hbox{}*\frm{oo}="8-4";
(-7.5,-32)*+[o]=<2pt>\hbox{}*\frm{oo}="9-4";
(-12.5,-32)*+[o]=<2pt>\hbox{}*\frm{oo}="10-4";
(-17.5,-32)*+[o]=<2pt>\hbox{}*\frm{oo}="11-4";
(-22.5,-32)*+[o]=<2pt>\hbox{}*\frm{oo}="12-4";
(-27.5,-32)*+[o]=<2pt>\hbox{}*\frm{oo}="13-4";
(-37.5,-32)*+[o]=<2pt>\hbox{}*\frm{oo}="15-4";
"0-1";"0-2"**\dir{-};
"0-1";"1-2"**\dir{-};
"1-1";"2-2"**\dir{-};
"1-1";"3-2"**\dir{-};
"0-2";"0-3"**\dir{-};
"0-2";"1-3"**\dir{-};
"1-2";"3-3"**\dir{-};
"2-2";"4-3"**\dir{-};
"2-2";"5-3"**\dir{-};
"3-2";"6-3"**\dir{--};
"3-2";"7-3"**\dir{-};
"0-3";"0-4"**\dir{-};
"0-3";"1-4"**\dir{-};
"1-3";"2-4"**\dir{-};
"1-3";"3-4"**\dir{-};
"3-3";"6-4"**\dir{--};
"3-3";"7-4"**\dir{-};
"4-3";"8-4"**\dir{-};
"4-3";"9-4"**\dir{-};
"5-3";"10-4"**\dir{-};
"5-3";"11-4"**\dir{-};
"6-3";"12-4"**\dir{--};
"6-3";"13-4"**\dir{--};
"7-3";"15-4"**\dir{-};
\endxy
}
\caption{Two ways in which different futures might be witnessed.  In both cases, it is easy to verify that the futures are different using translation queries.}
\end{figure}
To describe the required information in the general case, let $x_0, \ldots , x_k$ enumerate the minimal paths to each of the states of $G$.  Let $x_0, \ldots , x_n$ enumerate $x_0, \ldots x_k$ together with all possible one-step extensions of the paths $x_0, \ldots , x_k$.  Note that $n$ is bounded by $|\states[G]| + |\states[G]||E|$, where $E$ is the transition relation for $G$.  Fix $i \leq n$.  If $|x_i| > 0$, let $P$ be the $<_{ac}$-greatest antichain that is a member of $\vac(f(x_i^- y))$ for all strings $y$ such that $x_i^- y \in L$; if $|x_i| = 0$, define $P = \{ \lambda \}$.  Define $X$ to be the $<_{llex}$-least member of $P^{-1} f(\hat{x_i})$.  For each $X_0 \prec X$ that is not a member of a valid antichain of $P^{-1} f(\hat{x_i})$, there is a $Y \in P^{-1} f(\hat{x_i})$ no prefix of which has the same future in $P^{-1} f(\hat{x_i})$ as $X_0$ and there is a translation in $f(\hat{x_i})$ witnessing the different futures.  We denote the set of such witnessing translation pairs, one for each prefix of $X$ not in a valid antichain, by $S_i$.  Let $Z$ be the $<_{llex}$-least member of $P$.  Let $N_0(x_i) = \{ \langle \hat{x_i}, ZX \rangle \} \cup S_i$ and define $N_0(f) = \bigcup_{i\leq n} N_0(x_i)$.  Observe that $N_0(f)$ is polynomial in the size of $G$.

Consider $x\in T[L]$.  Let $\vac = \bigcap_{x \prec y \in L} \vac(f(y))$.  For each $P \in \vac(f(x)) \setminus \vac$, observe that there is an example that witnesses the fact that $P$ is not in $\vac$.  Such examples demonstrate violations of either the maximality or the validity of the given antichain.  In either case, the witness is a single element of the graph of $f$ (a paired string and translation).  Since $\vac(f(x))$ is finite, the number of examples needed to eliminate all incorrect maximal antichains is also finite.  We define $N_1(x)$ to be the set which consists of exactly one example for each member of $\vac(f(x)) \setminus \vac$.  For the sake of a unique definition, we assume that we always choose the $<_{llex}$-least example -- although this is not essential.  We can now define the second component of $CS_f$:  $N_1(f) = \bigcup_{q\in \states[G]} N_1(\hat{x_q})$.

$N_0$ and $N_1$ are required to perform onwarding correctly.  In order to perform merges, we must include enough data to identify the equivalence classes of states whose futures are the same.  There are two ways in which the futures may differ:
\begin{enumerate}
\item there is a string, $z$, such that $xz \in L$, but $yz \not\in L$ or

\item for $X \in G[p_x]$ and $Y \in G[p_y]$, there are $z$ and $Z$ such that $XZ \in G[p_{xz}]$, but $YZ \not\in G[p_{yz}]$.
\end{enumerate}
For each member of $\states[G]$ there is a finite collection of examples which uniquely identify the state.  Let $N_2(q_x)$ be a canonically chosen collection of such examples for $q_x$.  Let $e$ be a transition and $\hat{p}$ be the $<_{llex}$-least path starting at the initial state, ending with a \#-transition and including $e$.  Define $N_2^*(e)$ to be the set of those translations of $\hat{p}$ each of which uses a different output of the transition $e$ and is $<_{llex}$-least amongst the translations of $\hat{p}$ that use that output.  $|N_2^*(e)| = |\troutput(e)|$.  We define the final component of $CS_f$ as follows.
\[
N_2(f) = \bigcup_{x \in W} N_2(q_x) \cup \bigcup_{e \in E_G} N_2^*(e),
\]
where $W$ consists of the minimal paths to each state of $G$ as well as all paths that are immediate extensions of those paths.
\begin{Definition}
For an SDBL, $f$, we define the characteristic sample of $f$, $CS_f = N_0(f) \cup N_1(f) \cup N_2(f)$.
\end{Definition}

\subsection{Algorithms}

In all the algorithms that follow, loops over prefixes of a string will proceed in order of increasing length.  Also, when a subroutine returns multiple outputs (e.g., returns all the elements of an array) we assume that an appropriate loop is executed to load the returned values into the selected variables in the main program.

\subsubsection{Initializing the transducer}

\begin{Definition}
Given a string $x$ over the input alphabet of an SDT $G$, we say that $G$ is \emph{tree-like below $x$} if every path which begins at $q_x$ ends at a state which is the end state of exactly one transition.  These states are called the \emph{states below $x$}.  $G$ is said to be \emph{tree-like} if it is tree-like below its unique initial state.
\end{Definition}

Consider a dataset, $\mathcal D$.  We define an initial transducer by creating a state for every member of $T[\mbox{dom}(\mathcal D)]$.  A tree-like transducer is produced where all transitions output only $\lambda$ except for the \#-transitions at members of $\mbox{dom}(\mathcal D)$.  All outputs in the dataset are assigned to the \#-transitions.  


\begin{algorithm}[H]\label{alg-initial}
\caption{Forming the initial tree-like transducer (INITIAL)}
\KwData{A finite collection of translation pairs, $\mathcal D$.}
\KwResult{A tree-like SDT, $G_{\mathcal D}$.}
\For{$\langle x, X \rangle \in \mathcal D$}{
	$\states[G_{\mathcal D}] \cup \{r_x\} \rightarrow \states[G_{\mathcal D}]$ \\
	$E_{G_{\mathcal D}} \cup \{ e_x^{\#} = \langle r_x, r_{\lambda}, \#, X \rangle \} \rightarrow E_{G_{\mathcal D}}$\\
	\If {$x \neq \lambda$}{
		\For{$y \prec x$}{
			$\states[G_{\mathcal D}] \cup \{r_y\} \rightarrow \states[G_{\mathcal D}]$ \\
			$E_{G_{\mathcal D}} \cup \{ e_y = \langle r_{y^-}, r_y, y(|y|-1), \lambda \rangle \} \rightarrow E_{G_{\mathcal D}}$ \\
		}
	}
}
\Return $G_{\mathcal D}$
\end{algorithm}

The transducer that results from a run of Algorithm \ref{alg-initial} recognizes the translations in $\mathcal D$ and no other translations.

\subsubsection{Generating an array of all valid antichains}\label{genVacSection}

In order to simplify the presentation of the algorithms, we will not include the algorithms for several simple functions.  In particular, we will assume that $\lexorder(A)$ takes an array, $A$, as an input and returns an array with the same contents as $A$, but in lexicographic order.  $\llexorder(A)$ performs the same function, but for the $<_{llex}$-ordering.  $\lexleast$ and $\llexleast$ will be applied to sets and arrays and will return the $<_{lex}$- and $<_{llex}$-least member, respectively.  For sets of strings $P$ and $S$, we will use the operations $P^{-1}S$ and $P*S$ as built-in arithmetic operations.  Given an input string, $x$, output strings, $Z$ and $W$, and a set of translation pairs, $\mathcal D$, the function $COMPARE(x,Z,W,\mathcal D)$ returns \textsc{true} if, for every $\langle x,ZR \rangle, \langle x,WS \rangle \in \mathcal D$, the queries $[x,WR]_f$ and $[x,ZS]_f$ return values of \textsc{true}.  Otherwise, $COMPARE(x,Z,W,\mathcal D)$ returns \textsc{false}.  Applying the same notation used above, if $x$ is an input string, then $\hat{x}$ is the $<_{llex}$-least member of $L$ extending $x$.  Using these functions, we define an algorithm to create a list of all valid antichains when considering the tree of outputs of a single input string.

\begin{algorithm}[H]\label{VAC-alg}
\caption{List the valid antichains (VAC)}
\KwData{A finite collection of translation pairs, $\mathcal D$; $x \in L$; $X_{\ell}$, the current least translation prefix for $x$.}
\KwResult{An array, $A$, of all maximal antichains of the translations of $x$ in $\mathcal D$ which extend $X_\ell$ and are not provably invalid.}

$X_{\ell}^{-1}\{ Y : Y \succ X_{\ell} \wedge \langle x,Y \rangle \in \mathcal D \} \rightarrow T$\\
$\llexleast(T) \rightarrow Z$\\
\For{$W \prec Z$}{
	$W \rightarrow AC[0]$\\
	\For{$R\in T \wedge R \neq Z$}{
		\For{$V \prec R$}{
			$COMPARE(x,X_{\ell}W,X_{\ell}V,\mathcal D) \rightarrow status$\\
			\If{$status = \textsc{true}$}{
				$V \rightarrow AC[|AC|]$\\
				\textbf{break}\\
			}
		}
		\If{$status = \textsc{false}$}{
			\textbf{break}\\
		}
	}
	\If{$status = \textsc{true}$}{
		$AC \rightarrow A[|A|]$\\
	}
}
\Return{$A$}\\

\end{algorithm}
\begin{wrapfigure}{r}{0.4\textwidth}
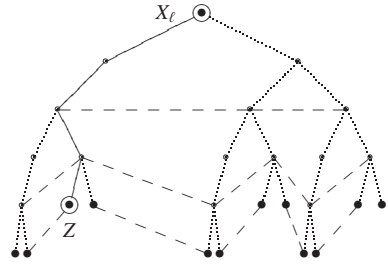

\centering
\resizebox{5cm}{!}{
\xy
(0,0)*+[o]=<8pt>\hbox{\textbullet}*\frm{oo}="0-0";
(-6,0)*+[o]=<2pt>\hbox{$X_{\ell}$}="13-4-caption";
(16,-8)*+[o]=<2pt>\hbox{}*\frm{oo}="0-1";
(-16,-8)*+[o]=<2pt>\hbox{}*\frm{oo}="1-1";
(24,-16)*+[o]=<2pt>\hbox{}*\frm{oo}="0-2";
(8,-16)*+[o]=<2pt>\hbox{}*\frm{oo}="1-2";
(-24,-16)*+[o]=<2pt>\hbox{}*\frm{oo}="3-2";
(28,-24)*+[o]=<2pt>\hbox{}*\frm{oo}="0-3";
(20,-24)*+[o]=<2pt>\hbox{}*\frm{oo}="1-3";
(12,-24)*+[o]=<2pt>\hbox{}*\frm{oo}="2-3";
(4,-24)*+[o]=<2pt>\hbox{}*\frm{oo}="3-3";
(-20,-24)*+[o]=<2pt>\hbox{}*\frm{oo}="6-3";
(-28,-24)*+[o]=<2pt>\hbox{}*\frm{oo}="7-3";
(30,-32)*+[o]=<2pt>\hbox{\textbullet}*\frm{oo}="0-4";
(26,-32)*+[o]=<2pt>\hbox{\textbullet}*\frm{oo}="1-4";
(18,-32)*+[o]=<2pt>\hbox{}*\frm{oo}="3-4";
(14,-32)*+[o]=<2pt>\hbox{\textbullet}*\frm{oo}="4-4";
(10,-32)*+[o]=<2pt>\hbox{\textbullet}*\frm{oo}="5-4";
(2,-32)*+[o]=<2pt>\hbox{}*\frm{oo}="7-4";
(-18,-32)*+[o]=<2pt>\hbox{\textbullet}*\frm{oo}="12-4";
(-22,-32)*+[o]=<8pt>\hbox{\textbullet}*\frm{oo}="13-4";
(-22,-36)*+[o]=<2pt>\hbox{$Z$}="13-4-caption";
(-30,-32)*+[o]=<2pt>\hbox{}*\frm{oo}="15-4";
(19,-40)*+[o]=<2pt>\hbox{\textbullet}*\frm{oo}="6-5";
(17,-40)*+[o]=<2pt>\hbox{\textbullet}*\frm{oo}="7-5";
(3,-40)*+[o]=<2pt>\hbox{\textbullet}*\frm{oo}="14-5";
(1,-40)*+[o]=<2pt>\hbox{\textbullet}*\frm{oo}="15-5";
(-29,-40)*+[o]=<2pt>\hbox{\textbullet}*\frm{oo}="30-5";
(-31,-40)*+[o]=<2pt>\hbox{\textbullet}*\frm{oo}="31-5";
"0-0";"0-1"**\dir{.};
"0-0";"1-1"**\dir{-};
"0-1";"0-2"**\dir{.};
"0-1";"1-2"**\dir{.};
"1-1";"3-2"**\dir{-};
"0-2";"0-3"**\dir{.};
"0-2";"1-3"**\dir{.};
"1-2";"2-3"**\dir{.};
"1-2";"3-3"**\dir{.};
"3-2";"6-3"**\dir{-};
"3-2";"7-3"**\dir{.};
"0-3";"0-4"**\dir{.};
"0-3";"1-4"**\dir{.};
"1-3";"3-4"**\dir{.};
"2-3";"4-4"**\dir{.};
"2-3";"5-4"**\dir{.};
"3-3";"7-4"**\dir{.};
"6-3";"12-4"**\dir{.};
"6-3";"13-4"**\dir{-};
"7-3";"15-4"**\dir{.};
"3-4";"6-5"**\dir{.};
"3-4";"7-5"**\dir{.};
"7-4";"14-5"**\dir{.};
"7-4";"15-5"**\dir{.};
"15-4";"30-5"**\dir{.};
"15-4";"31-5"**\dir{.};
%
"0-2";"1-2"**\dir{--};
"1-2";"3-2"**\dir{--};
"0-3";"3-4"**\dir{--};
"3-4";"2-3"**\dir{--};
"2-3";"7-4"**\dir{--};
"7-4";"6-3"**\dir{--};
"6-3";"15-4"**\dir{--};
"0-4";"1-4"**\dir{--};
"1-4";"6-5"**\dir{--};
"6-5";"7-5"**\dir{--};
"7-5";"4-4"**\dir{--};
"4-4";"5-4"**\dir{--};
"5-4";"14-5"**\dir{--};
"14-5";"15-5"**\dir{--};
"15-5";"12-4"**\dir{--};
"12-4";"13-4"**\dir{--};
"13-4";"30-5"**\dir{--};
"30-5";"31-5"**\dir{--};
\endxy
}
\caption{$X_{\ell}$ is the least translation prefix and $Z$ is the least translation}
\end{wrapfigure}
%
%
%
%
One of the inputs of Algorithm \ref{VAC-alg} is the ``current least translation prefix of $x$''.  The current translation prefix will converge to the $<_{llex}$-least output string generated along the unique path corresponding to $x$.  $X_{\ell}$ provides a canonical output prefix for testing outputs using translation queries.  The first step of Algorithm \ref{VAC-alg} restricts $\mathcal D$ to the tree of translation pairs whose second component extends the least translation prefix.  Every antichain of the tree must contain a prefix of the $<_{llex}$-least member of the tree.  Because of the linear ordering of the valid antichains (see Theorem \ref{ac-linear}), there is at most one valid antichain for each prefix of the least member of the tree.  COMPARE is used to look for matching nodes to form valid antichains.  As can be seen in the figure, all valid antichains include prefixes of the $<_{llex}$-least member and no two valid antichains contain the same prefix.  This provides both a bound on the number of valid antichains and a convenient method to search for the valid antichains.

We formalize the above intuition in the proof of the following lemma. 

\begin{Lemma}
Let $\mathcal D$ be a finite set consistent with an SDBL $f$ over $L$ with canonical transducer $G$ and $x \in L$.  Suppose $CS_f \subseteq \mathcal D$, $x$ is $<_{llex}$-least among $y \in L$ such that $q_y = q_x$, $X_{\ell}$ is the least translation prefix of $x$ and $X$ is the $<_{llex}$-least member of $f(x)$.   Given inputs $\mathcal D$, $x$ and $X_{\ell}$ and given access to translation queries about $f$, Algorithm \ref{VAC-alg} outputs an array of antichains $A$ such that if $V$ is the set of valid antichains of translations of $x$ which extend $X_{\ell}$, then each antichain in $A$ is extended by an antichain in $V$ and each antichain in $V$ contains a unique antichain in $A$.  Furthermore, $A$ contains the unique antichain which is a valid antichain of all translations of $y$ that extend $X_{\ell}$ for all $y \in L$ such that $y \succeq x$.
\end{Lemma}

\begin{proof}
Since $x$ is $<_{llex}$-least such that $p_x$ is a path to the state at which $p_x$ terminates, $CS_f$ (hence, $\mathcal D$ also) contains $\langle x,X \rangle$.  Furthermore, for each prefix $Y$ such that $X_{\ell} \preceq Y \preceq X$, if $Y$ is not a member of a valid antichain, then $\mathcal D$ contains witnessing strings so that this can be determined using translation queries (this is the content of $N_0(x)$ defined in Section \ref{sec-char-sample}).  Algorithm \ref{VAC-alg} performs exactly those translation queries necessary to determine that $Y$ is not a member of a valid antichain.  Thus, the array of antichains that the algorithm returns will correctly exclude all subsets of maximal antichains that contain such a $Y$.  

We now show that each antichain in the output must be a subset of a valid antichain.  For the sake of a contradiction, suppose an antichain in $A$ contains $Y$ and $Z$ where $X_{\ell} \preceq Y \preceq X$ and $Y$ is a member of a valid antichain, but the unique member of $V$ that contains $Y$ does not contain $Z$.  By the definition of the $N_0$ component of $CS_f$, $\mathcal D$ will contain a string such that when $COMPARE$ is run, $Y$ and $Z$ will be flagged as not members of the same valid antichain.

Let $F$ be the $<_{sdf}$-maximum SDF for $f$.  By the definition of $N_2(f)$, for each $Z \in F(x)$ there must be a $Y$ such that $\langle \hat{x},Y \rangle \in \mathcal D$ and $X_{\ell}Z \preceq Y$.  Consequently, $A$ will contain $\{ X_{\ell} \} * F(x)$, which is the unique antichain which is a valid antichain of the extensions of $X_{\ell}$ in $f(y)$ for all $y \in L$ such that $x \preceq y$.
\end{proof}

\subsubsection{Performing onwarding on a single node}

The next algorithm takes an array of antichains and produces the $<_{ac}$-greatest antichain that appears to be a valid antichain of all trees of outputs on inputs extending~$x$.  As the data may still be incomplete, testing the validity for other trees is done using translation queries.
%
%

\begin{algorithm}[H]\label{TEST-alg}
\caption{Testing an array of antichains against a dataset (TESTVPS)}
\KwData{A string, $x$, over the input alphabet; an array, $A$, of antichains for the output tree of input $\hat{x}$; a collection of translation pairs, $\mathcal D$.}
\KwResult{The $<_{ac}$-greatest member of the array, $A$, for which there is no evidence in $\mathcal D$ that the selected antichain is not valid for all output trees in the future of $x$.}

\For{$i = |A|-1; i \geq 0; i--$}{
	$\mbox{`not valid'} \rightarrow status$\\
	\For{$\langle xy,Z \rangle \in \mathcal D$}{
		\For{$R\in A[i]$}{
			\If{$R \prec Z$}{
				$R^{-1}Z \rightarrow W$\\
				$\mbox{`valid'} \rightarrow status$\\
				\For{$Q\in A[i]$}{
					\If{$[xy,QW]_f = \textsc{false}$}{
						$\mbox{`not valid'} \rightarrow status$\\
						\textbf{break}\\
					}
				}
				\If{$status = \mbox{`not valid'}$}{
					\textbf{break}\\
				}
			}
		}
	}
	\If{$status = \mbox{`valid'}$}{
		\Return{$A[i]$}\\
	}
}

\end{algorithm}

Observe that there will always be a valid antichain that causes the above algorithm to terminate; if there is no other, then it will terminate on $\{ \lambda \}$.  In the following algorithm, we use \textsc{null} to test for the existence of an optional argument.

\begin{algorithm}[H]\label{ONWARD-alg}
\caption{Onwarding a tree-like portion of a transducer (ONWARD)}
\KwData{A string $x$; a transducer, $G$, which is tree-like below a string, $x$; $X_\ell$, the current least translation prefix for $x$; a collection of translation pairs, $\mathcal D$; a set of strings, $S$ (optional).}
\KwResult{A transducer that differs from $G$ only on transitions whose end state is $q_x$ or a state below $x$.}
$S \rightarrow P$\\
\If{$P = \textsc{null}$}{
	$VAC(\mathcal D,x,X_\ell) \rightarrow A$\\
	$TESTVPS(x,A,\mathcal D) \rightarrow P$\\
}

$\troutput(e_x) * P \rightarrow \troutput(e_x)$\\

\For{$y \in \mbox{dom}(\mathcal D) \wedge x \prec y$}{
	$P^{-1} \troutput(e_y) \rightarrow \troutput(e_y)$\\
}

\end{algorithm}

The purpose of Algorithm \ref{ONWARD-alg} is to advance as much translation as possible in a tree-like portion of a transducer.

%
%

\subsubsection{Merging states}

Following conventions presented in \cite{cdlh-book}, we will label states during the learning process as \textsc{red} states if it is not possible to merge them with any $<_{llex}$-lesser state.  Initially, only the input state, $q_{\lambda}$, is a \textsc{red} state.  We proceed through the states in $<_{llex}$-order.  When a new state is found that cannot be merged with any \textsc{red} state, then it becomes a new \textsc{red} state.

The next algorithm we present merges two states if there is no evidence that the underlying transducer behaves differently on extensions of the inputs of the two states.  In this operation, we assume that the first argument is a \textsc{red} state, the second argument is not, and that onwarding has already been performed for both states.  In order to present the algorithm succinctly, we define a function similar to $COMPARE$ from Section \ref{genVacSection}.  Define $FUTURE(x,y,G,\mathcal D) = \textsc{true}$ if 
\begin{align*}
\big(\forall X \in G[p_x] \cap \mbox{ran}(\mathcal D), &Y \in G[p_y] \cap \mbox{ran}(\mathcal D), \langle z,Z \rangle \in \mathcal D\big)\Bigg( \\
&(x \preceq z \wedge X \preceq Z \rightarrow [y(x^{-1}z),Y_0 (X^{-1}Z)]_f = \textsc{true}) \\
&\wedge (y \preceq z \wedge Y \preceq Z \rightarrow [x(y^{-1}z),X_0 (Y^{-1}Z)]_f = \textsc{true}) \Bigg),
\end{align*}
 where $X_0 = \llexleast(G[p_x])$ and $Y_0 = \llexleast(G[p_y])$.  Otherwise, $FUTURE(x,y,G,\mathcal D) = \textsc{false}$.  Note that finding $\llexleast(G[p_x])$ does not require enumeration all elements of $G[p_x]$, which could be exponential in the length of $x$.  To determine $\llexleast(G[p_x])$, one need only find the least element of each set of translations along the path $p_x$.

\begin{algorithm}[H]\label{merge-alg}
\caption{MERGE}
\KwData{A \textsc{red} state, $q_x$; a non-\textsc{red} state, $q_y$; a transducer, $G$, that is tree-like below $q_y$; a collection of translation pairs, $\mathcal D$.}
\KwResult{A transducer; a boolean value of \textsc{true} if the two states have been merged and \textsc{false} otherwise.}

$FUTURE(x,y,G,\mathcal D) \rightarrow status$\\
\If{$status = \textsc{true}$}{
	$q_x \rightarrow end(e_y)$\\
	$\states[G] \setminus\{ q_y \} \rightarrow \states[G]$\\
	\For{$z\in \mbox{dom}(\mathcal D) \wedge z \succ y$}{
		\For{$y \prec w \preceq z$}{
			\If{$q_{x(y^{-1}w)} \in \states[G]$}{
				$\states[G] \setminus\{ q_{w} \} \rightarrow \states[G]$\\
			}
			$q_{x(y^{-1}w)} \rightarrow start(e_{w})$\\
			$q_{x(y^{-1}w) \trinput(e_{w})} \rightarrow end(e_{w})$\\
		}
	}
	\Return{$\langle G, \textsc{true} \rangle$}
}
\Else{
	\Return{$\langle G, \textsc{false} \rangle$}
}

\end{algorithm}

If $G$ is a transducer generated from a dataset, it is likely that $G$ will include non-equivalent states for which there is no evidence in their futures to distinguish them.  Ultimately, this will not be an obstacle to learning because if the characteristic sample has appeared, there will be enough data to distinguish earlier states that will be processed first.

\subsubsection{The learning algorithm}\label{learnAlgSDT}

Our final algorithm combines onwarding and merging into a single process.  We proceed through the states of the initial transducer in $<_{llex}$-order, first onwarding and then attempting to merge with lesser states.  If a state cannot be merged with any lesser state, it is fixed and will not subsequently be changed.  The fact that such states are fixed is recorded by their membership in a set $\textsc{red}$.

%
%

\begin{algorithm}[H]\label{learningAlg}
\caption{Learning an SDT}
\KwData{A collection of translation pairs, $\mathcal D$.}
\KwResult{A transducer.}

$INITIAL(\mathcal D) \rightarrow G_0$\\
$\llexorder(\states[G_0]) \rightarrow S$\\
$q_{\lambda} \rightarrow \textsc{red}[0]$\\
$0 \rightarrow i$\\
\For{$q_x \in S$}{
	\If{$q_x \in \textsc{red} \vee q_{x^-} \not\in \textsc{red}$}{
		\textbf{continue}\\
	}
	\Else{
		$ONWARD(x,G,\llexleast(G[p_x]),\mathcal D) \rightarrow G$\\
		\For{$q_y \in \textsc{red}$}{
			$MERGE(q_y,q_x,G,\mathcal D) \rightarrow \langle G,status \rangle$\\
			\If{$status = \textsc{true}$}{
				\textbf{break}\\
			}
		}
		\If{$status = \textsc{false}$}{
			$x \rightarrow \textsc{red}[i]$\\
			$i++$\\
		}
	}
}

\end{algorithm}

\begin{Lemma}\label{red-paths}
Let $f$ be an SDBL with canonical SDT $G$ and let $\mathcal D$ be a finite set consistent with $f$ which contains $CS_f$.  At every stage during the execution of Algorithm \ref{learningAlg} with input $\mathcal D$ and given access to translation queries about $f$, if $G'$ is the SDT constructed so far and $p_x$ is a path through $G'$ that exclusively involves \textsc{red} states, then $G[p_x] = G'[p_x]$.  Furthermore, if $x$ and $y$ are strings such that such that their unique paths $p_x$ and $p_y$ terminate at different \textsc{red} states of $G'$, then the states $q_x$ and $q_y$ of $G$ are distinct. 
\end{Lemma}

\begin{proof}
Let $G_0$ be the SDT that results from Algorithm \ref{alg-initial} and let $F$ be the $<_{sdf}$-maximum SDF for $f$.  We prove the lemma by induction.  Initially, the only \textsc{red} state is the initial state and the lemma holds trivially.  Now suppose that the lemma holds for $G'$ at the beginning of an iteration of the main for-loop in Algorithm \ref{learningAlg}.  Let $G''$ be the result of executing the next iteration of the for-loop.  

If no new \textsc{red} states have been added, then a previously non-\textsc{red} state, $q_1$, had Algorithm \ref{ONWARD-alg} applied to it and was merged with a \textsc{red} state, $q_0$.  Let $x$ be the $<_{llex}$-least string such that the path $p_x$ in $G'$ ends at $q_x = q_1$.  Since $q_1$ was not a \textsc{red} state, $G'$ must have been tree-like below $x$ and because of the induction hypothesis, the correct least translation prefix for $x$ will have been used by Algorithm \ref{ONWARD-alg}.  Furthermore, since the unique state in $G'$ with a transition to $q_1$ is a \textsc{red} state, by the definition of $N_0(f)$, $\mathcal D$ contains examples to guarantee that Algorithm \ref{VAC-alg} correctly identifies valid antichains.  Thus, Algorithm \ref{ONWARD-alg} must have identified the unique $<_{ac}$-greatest antichain which is a valid antichain of $f(y)$ for all $y \in L$ such that $y \succeq x$.  By the induction hypothesis, $CS_f \subseteq \mathcal D$ must contain examples that uniquely identify the future of $q_0$.  Since precisely those examples from $\mathcal D$ will be tested for $q_1$ using translation queries when Algorithm \ref{merge-alg} is run, the fact that $q_0$ and $q_1$ were merged implies that $q_0 \equiv q_1$.  Consequently, all paths that only visits \textsc{red} states of $G''$ satisfy the lemma.  Since no new \textsc{red} states were introduced, the second conclusion in the statement of the lemma follows automatically from the induction hypothesis.

Now suppose that a new \textsc{red} state, $q_1$, is selected.  Since the state was marked as a \textsc{red} state, no $<_{llex}$-lesser state is equivalent, meaning that if $x$ is $<_{llex}$-least such that $p_x$ in $G'$ ends at $q_1$, then $x$ defines the $<_{llex}$-least path to some state of $G$.  Consequently, $CS_f \subseteq \mathcal D$ contains examples to guarantee that Algorithm \ref{ONWARD-alg} identifies the correct antichain, $F(x)$.  Since $q_1$ was not merged with any existing \textsc{red} state, there must be examples in $\mathcal D$ that distinguish $q_1$ from each of the \textsc{red} states.  Thus, for each such \textsc{red} state, $q$, we know that $q_1 \not\equiv q$.  Since the for-loop in Algorithm \ref{learningAlg} proceeds through the states in $<_{llex}$-order, this means that if $x$ is $<_{llex}$-least such that the path $p_x$ in $G'$ ends at $q_1$, then $x$ is $<_{llex}$-least such that the state $q_x$ in $G$ has the same future as $q_1$.  We may conclude, therefore, that $CS_f \subseteq \mathcal D$ contains examples that uniquely identify the future of $q_1$.  Finally, to prove that $G''[p_z] = G[p_z]$ for any $z$ such that $p_z$ that only visits \textsc{red} states of $G''$ we need only consider paths that end at $q_1$ and do not visit $q_1$ at any other point.  There is only one state, $q$, in $G''$ that has a transition to $q_1$ (because $G''$ is tree-like below $q_1$) and that state is a \textsc{red} state in both $G'$ and $G''$.  Thus, $G[p_{z^-}] = G'[p_{z^-}] = G''[p_{z^-}]$.  The transition from $q$ to $q_1$ with input $z(|z|-1)$ has output $F(x)$, thus $G''[p_z] = G''[p_{z^-}] * F(x) = G[p_z]$.
\end{proof}

\subsection{Learnability of SDBLs} 

\begin{Theorem}\label{learnableThm}
The class of SDBLs is polynomially identifiable in the limit with translation queries.
\end{Theorem}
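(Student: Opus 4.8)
The plan is to verify the three requirements of the definition of polynomial identification in the limit with translation queries, taking $CS_f = N_0(f) \cup N_1(f) \cup N_2(f)$ as the characteristic sample and Algorithm \ref{learningAlg} as the learner, run with access to $DK_f$ and $[f]$. First I would establish that $|CS_f|$ is polynomial in $|G|$. The excerpt already observes that $|N_0(f)|$ is polynomial, and the same counting argument applies to the other two components: there are finitely many states; for each state only finitely many members of $\vac(f(\hat x))$ must be eliminated, each contributing a single witness to $N_1(f)$; and $N_2(f)$ consists of one canonically chosen distinguishing set per state in $W$ (minimal paths plus immediate extensions) together with one translation per output of each transition, so $|N_2^*(e)| = |o(e)|$. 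Since every string follows a unique path through an SDT, these counts are bounded by a polynomial in $|G|$ and the maximal output-set sizes.

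The core of the argument is correctness: whenever $\mathcal D \supseteq CS_f$, Algorithm \ref{learningAlg} outputs a transducer isomorphic to the canonical $G_f$, hence equivalent to $G$. This splits into onwarding and merging. For onwarding I would argue that, for each $x$ processed, Algorithm \ref{VAC-alg} together with the data of $N_1(f)$ lists exactly the maximal antichains of $f(\hat x)$ that are not demonstrably invalid, using $COMPARE$ and translation queries to reconstruct suffix-futures even when $\mathcal D$ omits some translations; then Algorithm \ref{TEST-alg}, fed the witnesses of $N_0(f)$, selects the $<_{ac}$-greatest antichain that is valid across all futures of $x$. By Theorem \ref{ac-linear} the surviving valid antichains are linearly ordered, and by Theorem \ref{maxSDF} the greatest one coincides with $\Pi F_m(x)$, so Algorithm \ref{ONWARD-alg} reproduces the maximum-SDF outputs $F_m(x)$ exactly. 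For merging I would invoke Lemma \ref{boundedFuture}: there is an $n$ bounding the depth at which futures are determined, and $N_2(f)$ contains, for each state, the bounded-future signature witnessing both kinds of future-difference (domain divergence and output divergence). Consequently $FUTURE(x,y,G,\mathcal D)$ returns \textsc{true} precisely when $x \equiv y$, so two states are merged if and only if they are $\equiv$-equivalent in the sense of Definition \ref{futureOne}. Since the learner fixes \textsc{red} states in $<_{llex}$-order, the resulting quotient matches the $\equiv$-classes, and the produced machine is isomorphic to $G_f$.

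For the polynomial time bound, I would note that INITIAL (Algorithm \ref{alg-initial}) builds a tree with $|T[\mbox{dom}(\mathcal D)]|$ states, which is polynomial in $|\mathcal D|$; the main loop processes each state once; and each invocation of the onwarding and merging subroutines performs a number of comparisons, set operations, and translation queries bounded by a polynomial in $|\mathcal D|$ and $|G|$, since each ranges over prefixes of strings occurring in $\mathcal D$ and over the finitely many bounded antichains. As remarked after the definition of the learning model, the oracle-call count is subsumed by the overall step count, so it is polynomial as well.

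The step I expect to be the main obstacle is the joint correctness of onwarding and merging, because of their sequential dependency: merging at $x$ is only meaningful once $x$ has been correctly onwarded, and onwarding itself relies on the current least translation prefix $X_\ell$ having stabilized to the true $<_{llex}$-least output along $p_x$. I would therefore carry out an induction on the $<_{llex}$-ordering of states, showing that processing states in this order, together with the order in which $N_0(f)$, $N_1(f)$, and $N_2(f)$ supply their witnesses, guarantees that every \textsc{red} state committed so far already agrees with $F_m$ and $\equiv$; this ensures that each onwarding and merging decision is taken only after the data, augmented by queries, has pinned down the canonical choice.
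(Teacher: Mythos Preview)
Your proposal is correct and follows essentially the same three-part structure as the paper's own proof: polynomial size of $CS_f$, polynomial running time, and correctness of onwarding followed by merging, with the same supporting results (Theorem \ref{ac-linear}, Theorem \ref{maxSDF}, Lemma \ref{boundedFuture}) and the same $<_{llex}$-order induction idea. One small correction: you have swapped the roles of $N_0(f)$ and $N_1(f)$ --- $N_0(f)$ supplies the witnesses that let Algorithm \ref{VAC-alg} discard antichains that are invalid for the single tree $f(\hat x)$, while $N_1(f)$ supplies the witnesses that let Algorithm \ref{TEST-alg} discard antichains that fail to be valid across all futures of $x$.
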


\begin{proof}
Let $f$ be an SDBL with canonical SDT $G$ and let $\mathcal D$ be a collection of translation pairs consistent with $f$ and containing $CS_f$.  We apply Algorithm \ref{learningAlg} to learn $f$ from $\mathcal D$.  To prove that Algorithm \ref{learningAlg} identifies $f$ in the limit in polynomial time, we must verify three claims.  First, we must show that the size of the chosen characteristic sample is polynomial in the size of the canonical transducer of the target.  Second, we must show that the algorithm terminates within a number of steps that is polynomial in the size of the canonical transducer of the target and in the size of the given data.  Third, we must show the SDT produced by Algorithm \ref{learningAlg} generates $f$.

The first claim is easy.  As noted in the section in which $CS_f$ was defined, $N_0(f), N_1(f)$ and $N_2(f)$ are all polynomial in the size of $G$.

An inspection of the algorithms shows that they converge in polynomial time and that only a polynomial number of translation queries are made.  We conclude that the second claim is true.

Finally, we prove the third claim.  Algorithm \ref{learningAlg} terminates at the point when every state in the SDT generated by Algorithm \ref{alg-initial} has been either marked as a \textsc{red} state or merged with another state.  Suppose that $G'$ is the output of Algorithm \ref{learningAlg}.  When the algorithm terminates, every state is a \textsc{red} state.  Consequntly, for any $x$ for which there is a path $p_x$ in $G'$, by Lemma \ref{red-paths} $G[p_x] = G'[p_x]$.  Thus, we need only prove that for all $x$, if $x$ defines a path through $G$ then $x$ defines a path through $G'$.  By the definition of $N_0(f)$, every transition in $G$ is used at least once by translations in $CS_f$.  Fix $x$ which defines a path through $G$ and let $xa$ be an extension by the single character $a$.  If $xa$ also defines a path through $G$, then $G$ has a transition $e$ such that $start(e)$ is the final state of the path defined by $x$ and $\trinput(e) = a$.  Let $G_0$ be the SDT generated by Algorithm \ref{alg-initial} and let $\langle z_0az_1,Z_0YZ_1 \rangle$ be a translation pair in $CS_f$ such that the path through $G$ defined by $z_0az_1$ uses $e$ when translating the character $a$.  Let $q_{z_0}$ and $q_{z_0a}$ be the states of $G_0$ corresponding to the initial segments $z_0$ and $z_0a$ of $z_0az_1$.  Let $x_0$ be the $<_{llex}$-least string to the final state of the path through $G$ defined by $x$.  Let $q_{x_0}$ be state of $G_0$ corresponding to $x_0$.  By the definition of $N_2(f)$, $CS_f$ contains examples that uniquely identify the future of $q_{x_0}$.  When $FUTURE(x_0,z_0a,G'',\mathcal D)$ is run at some point during the execution of Algorithm \ref{learningAlg} (where $G''$ is the current form of transducer under construction), the two strings will be recognized as having the same futures and will be merged.  Thus, if we assume that the paths through $G'$ defined by $x$ and $x_0$ are the same, then $xa$ defines a path through $G'$.  By induction on the length of $x$, we have shown the every string that defines a path through $G$ also defines a path through $G'$.

%
\end{proof}

\section{Related Results}

We establish the relationship between SDTs and two other classes of transducers that are not entirely deterministic: p-subsequential transducers and transducers that recognize $R_{ffb}$ relations.  We also look at some properties of SDBLs; specifically, we show that SDBLs are not closed under composition or reversal.

\subsection{Other Forms of Non-Determinism}\label{other-non-det-section}
The following definition is adapted from \cite{alla02}.

\begin{Definition}
A transducer is said to be \emph{$p$-subsequential} for some $p \in \mathbb N$ if for every transition $e$ in the transition relation, $|\troutput(e)| = 1$ unless $\trinput(e) = \#$, in which case $|\troutput(e)| \leq p$.  We say the a bi-language is $p$-subsequential if it is generated by a $p$-subsequential transducer.
\end{Definition}

\begin{Definition} \cite{saka08}
A binary relation, $R$, is \emph{finitary} if for every $x \in \mbox{dom}(R)$ the set $\{ y : \langle x,y \rangle \in R \}$ is finite.  If there is a number $n \in \mathbb N$ such that $|\{ y : \langle x,y \rangle \in R \}| \leq n$ for all $x$, then the relation is said to be \emph{bounded}.  Such a relation is \emph{finite-state} if there is a transducer that generates $R$.  We say that a relation is $R_{ffb}$ if it is finitary, finite-state and bounded.  We say that a transducer is $R_{ffb}$ if the bi-language it generates is $R_{ffb}$.
\end{Definition}

\begin{Proposition}
Every $p$-subsequential bi-language is $R_{ffb}$.
\end{Proposition}

\begin{proof}
A $p$-subsequential bi-language is finitary and bounded as every input string has at most $p$ distinct translations.  It is also clearly finite-state as it is generated by a $p$-subsequential transducer.  Thus, every $p$-subsequential bi-language is also $R_{ffb}$.
\end{proof}

\begin{Proposition}\label{psub-not-sdbl}
There is a $p$-subsequential bi-language which is not an SDBL.
\end{Proposition}

\begin{proof}
If $f$ is an SDBL, $x \in \mbox{dom}(f)$ and $X,Y \in f(x)$, then either $X = Y$ or $X$ and $Y$ are incomparable.  Thus, $f: \{a\} \rightarrow \{\{A,AA\}\}$ such that $f(a) = \{A,AA\}$ is not an SDBL.  It is, however, $p$-subsequential.
\end{proof}

\begin{Proposition}
There is an SDBL which is neither $p$-subsequential nor $R_{ffb}$.
\end{Proposition}

\begin{proof}
Let $G$ be a transducer with a single state, which is a terminal state, and one transition which starts and ends at the unique state, has input $a$ and output $\{A,B\}$.   $G$ is shown in Figure \ref{sdt-ffb}.
\begin{figure}[H]
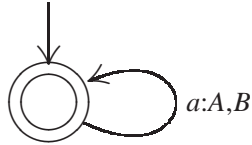

\centering
\resizebox{4cm}{!}{
\xy
(0,0)*++{\xy (0,10)*+{}="init";
(0,0)*+[o]=<20pt>\hbox{}*\frm{oo}="in-st";
(15,-10)*+{}="lp-1-1";
(15,10)*+{}="lp-1-2";
"init";"in-st"**\dir{-} ?>*\dir2{>};
"in-st";"in-st"**\crv{"lp-1-1"&"lp-1-2"} ?>*\dir2{>} ?*_!/-11pt/{_{a:A,B}};
\endxy}
\endxy
}
\caption{An SDT which generates an SDBL which is neither $R_{ffb}$ nor $p$-subsequential.}
\label{sdt-ffb}
\end{figure}
The set of translation pairs recognized by $G$ is $\{ \langle a^n,X \rangle : n \in \mathbb N \wedge X \in \{A,B\}^n \}$, which is not a bounded relation as $a^n$ is in the domain for every $n$ and has $2^n$ translations.  Since it is not bounded, it is neither $R_{ffb}$ nor $p$-subsequential.
\end{proof}

\begin{Proposition}
There is an $R_{ffb}$ bi-language which is neither an SDBL nor $p$-subsequential.
\end{Proposition}

\begin{proof}
Let $G$ be the transducer in the following figure.
\begin{figure}[H]
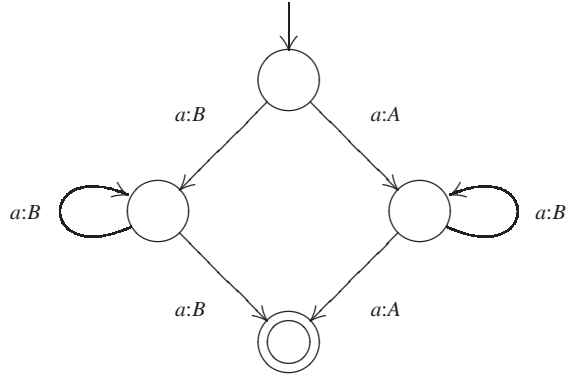

\centering
\resizebox{8cm}{!}{
\xy
(0,0)*++{\xy (0,10)*+{}="init";
(0,0)*+[o]=<20pt>\hbox{}*\frm{o}="in-st";
(15,-15)*+[o]=<20pt>\hbox{}*\frm{o}="ri-st";
(-15,-15)*+[o]=<20pt>\hbox{}*\frm{o}="le-st";
(0,-30)*+[o]=<20pt>\hbox{}*\frm{oo}="fi-st";
(30,-25)*+{}="lp-1-1";
(30,-5)*+{}="lp-1-2";
(-30,-25)*+{}="lp-2-1";
(-30,-5)*+{}="lp-2-2";
"init";"in-st"**\dir{-} ?>*\dir2{>};
"in-st";"le-st"**\dir{-} ?>*\dir2{>} ?*_!/-15pt/{_{a:B}};
"le-st";"fi-st"**\dir{-} ?>*\dir2{>} ?*_!/-15pt/{_{a:B}};
"in-st";"ri-st"**\dir{-} ?>*\dir2{>} ?*_!/15pt/{_{a:A}};
"ri-st";"fi-st"**\dir{-} ?>*\dir2{>} ?*_!/15pt/{_{a:A}};
"ri-st";"ri-st"**\crv{"lp-1-1"&"lp-1-2"} ?>*\dir2{>} ?*_!/-11pt/{_{a:B}};
"le-st";"le-st"**\crv{"lp-2-1"&"lp-2-2"} ?>*\dir2{>} ?*_!/11pt/{_{a:B}};
\endxy}
\endxy
}
\caption{An $R_{ffb}$ bi-language which is not an SDBL.}
\label{diamond-1}
\end{figure}
The set of translation pairs recognized by $G$ is $S = \{ \langle a^{n+2},B^{n+2} \rangle : n \in \mathbb N \} \cup \{ \langle a^{n+2},AB^nA \rangle : n \in \mathbb N \}$.  Suppose $G'$ is an SDT that recognizes $S$.  For every $n$, the input string $a^{n+2}$ has exactly two translations: $B^{n+2}$ and $AB^nA$.  Since $G'$ is has finitely many states, the first transition on the path through $G'$ defined by $a^{n+2}$ with an output other than $\{\lambda\}$ must have $\{AX,BY\}$ as a subset for some strings $X$ and $Y$.  Furthermore, the last transition on the path with output other than $\{\lambda\}$ must have $\{ZA,WB\}$ as a subset for some strings $Z$ and $W$.  This implies that $a^{n+2}$ should have at least four distinct translations, which is a contradiction. 

Similarly, suppose $S$ is recognized by a $p$-subsequential transducer $G_p$.  Since the two translations of $a^{n+2}$ differ at their first character, it must be that for every path through $G_p$ all translation occurs at the final transition (since only terminal transitions may have more than one output).  Given this observation, $G_p$ must have infinitely many states -- one for each string $a^{n+2}$.  This is a contradiction.
\end{proof}

\subsection{Composition and Reversal of SDBLs}

\begin{Definition}
Let $f$ and $g$ be two bi-languages.  We define the \emph{composition of $f$ and $g$} to be the bi-language $h$ such that $\mbox{dom}(h) = \mbox{dom}(f)$ and for every $x \in \mbox{dom}(f)$, $h(x) = \bigcup_{X \in f(x)} g(X)$.  We define the \emph{reversal of $f$} to be the bi-language $k$ with domain $S = \bigcup_{x \in \mbox{dom}(f)}f(x)$ such that $k(X) = \{ x \in \mbox{dom}(f) : X \in f(x) \}$.
\end{Definition}

\begin{Proposition}
There are SDBLs whose composition is not an SDBL.
\end{Proposition}

\begin{proof}
Let $G_f$ and $G_g$ be the following two SDTs and let $f$ and $g$ be the SDBLs generated by them.
\begin{figure}[H]
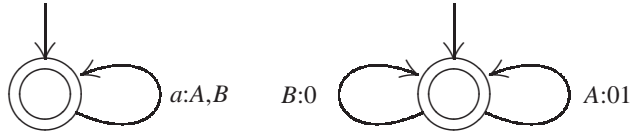

\centering
\resizebox{9cm}{!}{
\xy
(0,0)*++{\xy (0,10)*+{}="init";
(0,0)*+[o]=<20pt>\hbox{}*\frm{oo}="in-st";
(15,-10)*+{}="lp-1-1";
(15,10)*+{}="lp-1-2";
"init";"in-st"**\dir{-} ?>*\dir2{>};
"in-st";"in-st"**\crv{"lp-1-1"&"lp-1-2"} ?>*\dir2{>} ?*_!/-11pt/{_{a:A,B}}; 
\endxy }="G0";
(33,0)*++{\xy (0,10)*+{}="init";
(0,0)*+[o]=<20pt>\hbox{}*\frm{oo}="in-st";
(15,-10)*+{}="lp-1-1";
(15,10)*+{}="lp-1-2";
(-15,-10)*+{}="lp-2-1";
(-15,10)*+{}="lp-2-2";
"init";"in-st"**\dir{-} ?>*\dir2{>};
"in-st";"in-st"**\crv{"lp-1-1"&"lp-1-2"} ?>*\dir2{>} ?*_!/-11pt/{_{A:01}}; 
"in-st";"in-st"**\crv{"lp-2-1"&"lp-2-2"} ?>*\dir2{>} ?*_!/11pt/{_{B:0}}; 
\endxy }="G1";
\endxy
}
\caption{Two SDTs whose composition is is not an SDBL.  On the left, $G_f$; on the right, $G_g$.}
\end{figure}
Observe that if $h$ is the composition of $f$ and $g$, then $h(a) = \{ 0,01 \}$.  Since $0$ is a prefix of $01$, $h$ cannot be an SDBL.
\end{proof}

\begin{Proposition}
There is an SDBL whose reversal is not an SDBL.
\end{Proposition}

\begin{proof}
Let $G$ be the following SDT and let $f$ be the SDBL generated by $G$.  
\begin{figure}[H]
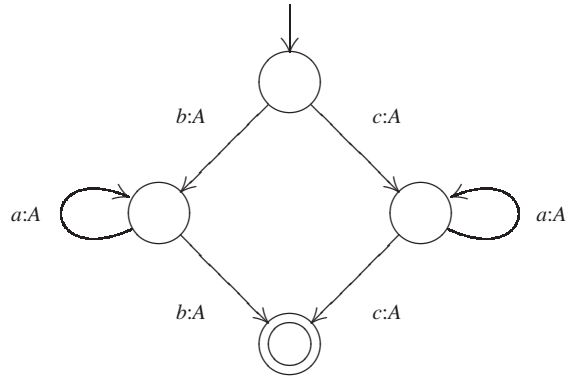

\centering
\resizebox{8cm}{!}{
\xy
(0,0)*++{\xy (0,10)*+{}="init";
(0,0)*+[o]=<20pt>\hbox{}*\frm{o}="in-st";
(15,-15)*+[o]=<20pt>\hbox{}*\frm{o}="ri-st";
(-15,-15)*+[o]=<20pt>\hbox{}*\frm{o}="le-st";
(0,-30)*+[o]=<20pt>\hbox{}*\frm{oo}="fi-st";
(30,-25)*+{}="lp-1-1";
(30,-5)*+{}="lp-1-2";
(-30,-25)*+{}="lp-2-1";
(-30,-5)*+{}="lp-2-2";
"init";"in-st"**\dir{-} ?>*\dir2{>};
"in-st";"le-st"**\dir{-} ?>*\dir2{>} ?*_!/-15pt/{_{b:A}};
"le-st";"fi-st"**\dir{-} ?>*\dir2{>} ?*_!/-15pt/{_{b:A}};
"in-st";"ri-st"**\dir{-} ?>*\dir2{>} ?*_!/15pt/{_{c:A}};
"ri-st";"fi-st"**\dir{-} ?>*\dir2{>} ?*_!/15pt/{_{c:A}};
"ri-st";"ri-st"**\crv{"lp-1-1"&"lp-1-2"} ?>*\dir2{>} ?*_!/-11pt/{_{a:A}};
"le-st";"le-st"**\crv{"lp-2-1"&"lp-2-2"} ?>*\dir2{>} ?*_!/11pt/{_{a:A}};
\endxy}
\endxy
}
\caption{An SDT which generates an SDBL whose reversal is not an SDBL.}
\end{figure}
For each $n$, the reversal of $f$ has two translations of $A^{n+2}$, $ba^nb$ and $ca^nc$.  From this observation we see that the reversal of $f$ cannot be an SDBL for exactly the same reason that the bi-language generated by the transducer in Figure \ref{diamond-1} cannot be an SDBL.
\end{proof}

\section{Conclusion}
We have presented a novel algorithm that learns a powerful class of transducers with the help of reasonable queries.  A probabilistic version of these transducers was defined in \cite{akra13}.  We are unaware of any results involving this version.  As both probabilities and translation queries can serve the purpose of answering questions about translation pairs not present in the given data, it seems possible that probabilistic transducers could be learned without translation queries, with statistical analysis taking the role of translation queries.

The learnability of SDBLs represents a significant advance in our ability to identify underlying structure in the challenging situation where the structure is non-deterministic.  While it does not strictly expand existing models (as can be seen from Proposition \ref{psub-not-sdbl}), it does contribute an enormous class of new bi-languages which are beyond the scope of deterministic transducers.

\section{Acknowledgements}

We would like the thank the anonymous referees for many useful comments, corrections and suggestions -- specifically, for suggesting that we examine the relationship between SDBLs, $p$-subsequential bi-languages and $R_{ffb}$ relations.

\end{document}